\newcommand{\secref}[1]{Section~\ref{#1}}
\newtheorem{lemma}{Lemma}
\newtheorem{definition}{Definition}
\newtheorem{theorem}{Theorem}
\newcommand{\Rmnum}[1]{\expandafter\@slowromancap\romannumeral #1@}
\newcommand{\citet}[1]{\citeauthor{#1} (\citeyear{#1})}
\DeclareMathOperator*{\argmin}{arg\,min}
\newcommand{\R}{\mathbb{R}}
\newcommand{\opt}{\textsc{Opt}}
\newcommand{\teachOpt}[1]{\widetilde{\textsc{Opt}}_{#1}}
\newcommand{\apx}{\widetilde{\textsc{Opt}}}
\newcommand{\Error}{\textsc{Err}}
\newcommand{\error}{\textnormal{err}}
\newcommand{\Hypotheses}{\mathcal{H}}
\newcommand{\hypothesis}{h}
\newcommand{\hstar}{\hypothesis^*}
\newcommand{\Examples}{\mathcal{Z}}
\newcommand{\Instances}{\mathcal{X}}
\newcommand{\examples}{Z}
\newcommand{\example}{{z}}
\newcommand{\instance}{{x}}
\newcommand{\clabel}{{y}}
\newcommand{\abs}[1]{\left\vert#1\right\vert}
\def \argmin {\mathop{\rm arg\,min}}
\newcommand{\newreptheorem}[2]{\newtheorem*{rep@#1}{\rep@title} 
	\newenvironment{rep#1}[1]{\def\rep@title{#2 \ref*{##1}}\begin{rep@#1}}{\end{rep@#1}}
}
\title{Understanding the Power and Limitations of Teaching with Imperfect Knowledge}
\author{
Rati Devidze$^{1}$\thanks{Authors contributed equally to this work.}
\and
Farnam Mansouri$^{1}$\footnotemark[1]
\and
Luis Haug$^{2}$
\and
Yuxin Chen$^{3}$
\And
Adish Singla$^{1}$
\affiliations
$^1$Max Planck Institute for Software Systems (MPI-SWS)\\
$^2$ETH Zurich\\
$^3$University of Chicago\\
\emails
\{rdevidze,mfarnam,adishs\}@mpi-sws.org,
lhaug@inf.ethz.ch,
chenyuxin@uchicago.edu
}
\begin{document}
\maketitle

\newtoggle{longversion}
\settoggle{longversion}{true}

\begin{abstract}
\looseness-11
Machine teaching studies the interaction between a teacher and a student/learner where the teacher selects training examples for the learner to learn a specific task. The typical assumption is that the teacher has perfect knowledge of the task---this knowledge comprises knowing the desired learning target, having the exact task representation used by the learner, and knowing the parameters capturing the learning dynamics of the learner. Inspired by real-world applications of machine teaching in education, we consider the setting where teacher's knowledge is limited and noisy, and the key research question we study is the following: When does a teacher succeed or fail in effectively teaching a learner using its imperfect knowledge? We answer this question by showing connections to how imperfect knowledge affects the teacher's solution of the corresponding machine teaching problem when constructing optimal teaching sets. Our results have important implications for designing robust teaching algorithms for real-world applications.
\end{abstract}
\vspace{-2mm}
\section{Introduction}\label{sec:intro}
The field of machine teaching studies the interaction between a teacher and a student/learner where the teacher's objective is to select a short sequence of examples for the learner to learn a specific task \cite{goldman1995complexity,DBLP:journals/corr/ZhuSingla18}.  An important application is in education where the \emph{learner} is a human student, and the \emph{teacher} is a computerized intelligent tutoring system (ITS) that selects curriculum of learning material for the student \cite{zhu2015machine,rafferty2016faster,sen2018machine,hunziker2019teaching}.  Another concrete application is the data poisoning (training-time) adversarial attacks where the \emph{learner} is a machine learning (ML) system, and the \emph{teacher} is a hacking algorithm that poisons the training data to maliciously change the learner's output to a desired target \cite{mei2015using,zhu2018optimal}.  Regardless of the application and the teacher's intentions, machine teaching provides a formal model of quantifying the teaching effort and an algorithmic framework for deriving an optimized curriculum of  material to have maximum influence on the learner with minimal effort. Considering applications in educational settings, the problem of designing optimized curriculum is of utmost importance because it leads to more effective learning, increased engagement, and reduced drop-out of students \cite{archambault2009student}. 

\looseness -1 The key issue in applying machine teaching algorithms to real-world applications is that these algorithms (and the corresponding theoretical guarantees) often make unrealistic assumptions about the teacher's knowledge of the learner and the task. It is typically assumed that the teacher has perfect knowledge of the following: (i) the learner, e.g., a computational model of the learning dynamics, and parameters capturing initial knowledge and learning rate, (ii) task specification, e.g., a complete ground truth data and  representation of the task as used by the learner.  Assuming such a powerful teacher might be meaningful for deriving theoretical guarantees (e.g., computing information-theoretic lower bounds of teaching complexity \cite{goldman1995complexity,zilles2011models,DBLP:conf/nips/ChenSAPY18,mansouri2019preference}) or for understanding the vulnerability of an ML system (e.g., against a white-box poisoning attack \cite{DBLP:conf/aaai/Zhang0W18,DBLP:conf/ijcai/Ma0H19}). However, for applications in education where the student is a human learner, this assumption is clearly unrealistic: 
learning dynamics and task specifications are usually obtained from domain expertise or inferred from historic student data (see \cite{singla2014near,piech2015deep,settles2016trainable,sen2018machine,hunziker2019teaching}), and this information is often incomplete and noisy. 

\subsection{Our approach and contributions}
Ironically, while the promise of machine teaching algorithms lies in providing near-optimal teaching curriculum with guaranteed performance, the fundamental assumptions required by these algorithms are clearly violated in practice. The main research question we study in this paper is the following: \textit{When does a teacher succeed or fail in effectively teaching a learner using its imperfect knowledge?}

\looseness -1 To answer this question, we require a formal specification of the task, a learner model, and a concrete teaching algorithm. In our work, we study a classification task in the context of teaching human students the rules to identify animal species---an important skill required for  biodiversity monitoring related citizen-science projects~\cite{sullivan2009ebird,van2018inaturalist}. This is one of the few real-world applications for which machine teaching algorithms with guarantees have been applied to teaching human learners in educational settings (see \cite{singla2013actively,singla2014near,chen18explain,mac2018teaching}) and hence is well-suited for our work.
We highlight some of the key contributions and results below:

\setlength{\leftmargini}{0.62em}
\begin{itemize}
	\item We formally study the problem of robust machine teaching. 
	To quantify the effectiveness of teaching, we introduce two metrics that measure teacher's success in terms of the learner's eventual error, 
	and the size of the teaching set as constructed by teacher with imperfect knowledge (\secref{sec:model}).
	\item 
	We show that teaching is much more brittle w.r.t noise in learning rate and less so when considering noise in prior knowledge of the learner. This theoretical result 
	aligns with a similar observation recently made in the context of a very different learning model   (Section~\ref{sec:noise-in-learning-parameters}).
	\item When studying robustness w.r.t. noise in task specification, we provide natural regularity conditions 
	on the data distributions and then use these conditions when specifying the guarantees. This allows us to take a less pessimistic view 
	than those considered by contemporary works which study the worst-case setting (Section~\ref{sec:noise-in-representation}).
\end{itemize}
\setlength{\leftmargini}{2.5em}

\subsection{Related work on robust machine teaching}  \label{sec:related}
\cite{DBLP:conf/icml/LiuDLLRS18,DBLP:conf/ijcai/MeloGL18,DBLP:conf/icml/Dasgupta0PZ19,DBLP:conf/ijcai/KamalarubanDCS19} have studied the problem of teaching a ``blackbox" learner where the teacher has very limited or no knowledge of the learner. The focus of these papers has been on designing an \emph{online} teaching algorithm that infers the learner model in an online fashion. These works often conclude that an \emph{offline} teaching algorithm that operates with limited knowledge can perform arbitrarily bad by considering a worst-case setting.  However, designing and deploying online algorithms is a lot more challenging in practice---the results in contemporary works have mostly been theoretical and might not be directly applicable in practice given the high sample complexity of online inference. The focus of our work is primarily on \emph{offline} teaching algorithms where knowledge about the task is usually obtained from domain expertise or inferred from historic student data. We aim at developing a fundamental 
understanding of how the performance guarantees of a teaching algorithm degrade w.r.t. the noise in teacher's knowledge when considering natural data distributions.

In another line of contemporary work on teaching a reinforcement learning agent,  \cite{DBLP:conf/nips/HaugTS18,DBLP:conf/nips/TschiatschekGHD19} have considered the setting where teacher and learner have different worldview and preferences---the focus of these works is on designing a teaching algorithm to account for these mismatches, and do not directly tackle the question we study in this paper. There has also been some empirical work on understanding the robustness and effect of different model components as part of the popular Bayesian Knowledge Tracing (BKT) teaching model used in ITS \cite{klingler2015performance,DBLP:conf/edm/KhajahLM16}---we see this work as complementary to ours as we take a more formal approach towards understanding the robustness of theoretical guarantees provided by machine teaching algorithms.

\section{Problem Formulation}\label{sec:model}
In this section, we first introduce the task representation, the learner's model, and the teacher's optimization problem. Then, we formulate the problem of teaching with imperfect knowledge, and discuss the notions of successful teaching.
 
\subsection{Teaching task and representation}\label{sec:model:task}
We consider the problem of teaching a binary classification task. Let $\Instances$ denote a ground set of instances (e.g., images) and the learner uses a feature representation of $\phi:\Instances \to \R^d$. 
Let  $\Hypotheses$ be a finite class of hypotheses considered by the learner where each hypothesis $\hypothesis \in \Hypotheses$ is a function $\hypothesis: \Instances \rightarrow \{-1, +1\}$. As a concrete setting, $\Hypotheses$ could be the set of hypotheses of the form $\hypothesis(\instance) = \textnormal{sign}(\langle \theta_h, \phi(x) \rangle)$ where $\theta_h \in \mathbb{R}^d$ is the weight vector associated with hypothesis $\hypothesis$.


Each instance $\instance \in \Instances$ is associated with a ground truth label given by the function $\clabel^*:\Instances \to \{-1, 1\}$ and we denote the ground truth label of an instance $\instance$ as $\clabel^*(\instance)$. The ground truth labels given by $\clabel^*$ are not known to the leaner. We use $\Examples$ to denote instances with their labels where a labeled example $\example \in \Examples$ is given by $\example = (\instance, \clabel^*(\instance))$. 

As typically studied in machine teaching literature, we consider a realizable setting where there exists a hypothesis $\hstar \in \Hypotheses$ such that $\forall \instance, \hstar(\instance) = \clabel^*(\instance)$.\footnote{\looseness-1 This assumption is w.l.o.g.: In a non-realizable setting, the teacher could consider $\hstar \in \Hypotheses$ as a hypothesis with minimal error in terms of disagreement of labels w.r.t. the labels given by $\clabel^*$ and the results presented in this paper can be extended to this general setting.\label{footnote:realizable}}~The teacher's goal can then be stated as that of teaching the hypothesis $\hstar$ to the learner by providing a minimal number of labeled examples to the learner.
Before formulating the teacher's optimization problem of selecting labeled examples, we state the learning dynamics of the learner below.


\subsection{Learner model}\label{sec:model:learner}
We consider a probabilistic learner model that generalizes the well-studied \emph{version space} models in classical machine teaching literature (see \cite{goldman1995complexity}).
At a high-level, the learner works as follows: During the learning process, the learner maintains a score for each hypothesis given by $Q(\hypothesis)$ capturing learner's belief of how good the hypothesis $\hypothesis$ is. Given $Q(\hypothesis)$, the learner acts probabilistically by drawing a hypothesis with probability $\frac{Q(\hypothesis)}{\sum_{\hypothesis' \in \Hypotheses} Q(\hypothesis')}$. Next, we discuss how the scores are updated.

Before teaching starts, the learner's prior knowledge about the task is captured by initial scores given by $Q_0(\hypothesis)$. For simplicity and as considered in \cite{singla2014near,chen18explain}, we will assume that $Q_0(\hypothesis)$ is a probability distribution over $\Hypotheses$. After receiving a set of labeled examples $S = \{(\instance_s, \clabel_s)\}_{s=1, 2, \ldots}$ from the teacher, we denote the learner's score as $Q(\hypothesis | S)$ which are updated as follows:
\begin{align}
    Q(\hypothesis | S) = Q_0(\hypothesis) \cdot \Pi_{s=1, 2, \ldots, |S|}  J(\clabel_s | \hypothesis, \instance_s, \eta) \label{eq:learner-model}
\end{align}
where $J$ is a likelihood function parameterized by $\eta \in (0, 1]$. In this paper, we consider the following likelihood function given by:
\begin{align}\label{eq:J-expo}
    J(\clabel_s | \hypothesis, \instance_s, \eta)  = 
    \begin{cases}
    1- \eta & \text{for } \hypothesis(\instance_s) \neq \clabel_s \\
    1 & \text{o.w.}
    \end{cases}
\end{align}
Here, the quantity $\eta$ captures a notion of learning rate. This model reduces to a randomized variant of the classical learner model \cite{goldman1995complexity}) for $\eta=1$. The main results and findings in the paper also generalize to more complex likelihood functions such as the logistic functions considered by \cite{singla2014near,mac2018teaching}. 


An important quantity of interest in this paper is the learner's expected error after receiving a set of examples $S$. 
Let $\error(\hypothesis) = \frac{\sum_{\example=(\instance, \clabel) \in \Examples} 1_{\hypothesis(\instance) \neq \clabel}}{|\Examples|}$ be the expected error of $\hypothesis$. The learner's expected error is given by the following:
\begin{align}
    \Error(S) &= \sum_{\hypothesis \in \Hypotheses}  \frac{Q(\hypothesis | S)}{\sum_{\hypothesis' \in \Hypotheses} Q(\hypothesis' | S)} \error(h) \label{eq:error} 
\end{align}

\subsection{Teaching with perfect knowledge}\label{sec:model:teacher-perfect}
\looseness -1 
We first consider the optimal teaching problem when the teacher has perfect knowledge of the teaching task represented as $(Q_0, \eta, \Examples, \hstar, \phi, \Hypotheses)$.
In particular, the teacher's knowledge comprises: (i) learning dynamics captured by learner's initial knowledge $Q_0$ and learning rate $\eta$, (ii) task specification captured by the target hypothesis $\hstar$, the ground set of labeled examples $\Examples$, the feature map $\phi$, and hypothesis class $\Hypotheses$.


Teacher's primary goal is to find a smallest set of labeled examples to teach so that learner's error is below a certain desirable threshold $\epsilon$. In order to construct the optimal teaching set, instead of directly optimizing for reduction in error, it is common in literature to construct surrogate objective functions which capture learner's progress towards learning $\hstar$ (also, see \cite{goldman1995complexity,singla2014near,chen18explain,mac2018teaching}.). 

Let us define a set function $F:2^{\Examples} \to \R_{\geq 0}$ as follows:
\begin{align}\label{eq:F}
    F(S) = \sum_{\hypothesis \in \Hypotheses} \Big(Q_0(\hypothesis) - Q(\hypothesis|S)\Big) \cdot \error(h)
\end{align}
\looseness-1
Here, the quantity $\big(Q_0(\hypothesis) - Q(\hypothesis|S)\big)$ captures the reduction in score for hypothesis $\hypothesis$ after learner receives examples set $S$. In particular, the surrogate objective function $F$ is a soft variant of set cover, and allows one to design greedy algorithms to find near-optimal teaching sets. 

For a given $\epsilon$, one can find a corresponding (sufficient) stopping value  $C_\epsilon$ such that $F(S)  \geq C_\epsilon$ implies that $\Error(S) \leq \epsilon$. As used in the optimization frameworks of \cite{singla2014near,mac2018teaching}, we use  $C_\epsilon = \sum_{\hypothesis \in \Hypotheses} Q_0(\hypothesis) \cdot \error(h) - \epsilon \cdot Q_0(\hstar)$.  This leads to the following optimization problem:
\begin{align}\label{eq:F-opt}
    \min_{S \subseteq \Examples} &|S| \ \text{s.t.} \ F(S) \geq \sum_{\hypothesis \in \Hypotheses} Q_0(\hypothesis) \cdot \error(h) - \epsilon \cdot Q_0(\hstar)
\end{align}
where $F(S)$ is given in Eq.~\ref{eq:F}.
We use $\opt_\epsilon$ to denote the optimal teaching set as a solution to the problem \eqref{eq:F-opt}.


%
%

\subsection{Teaching with imperfect knowledge}\label{sec:model:teacher-imperfect}
We now consider a teacher with imperfect knowledge and study the following different settings:
\setlength{\leftmargini}{0.62em}
\begin{itemize}
    \item having noise on learner's initial knowledge $Q_0$ ( Section~\ref{sec:noise-in-learning-parameters:prior})
     \item having noise on learner's learning rate $\eta$ (Section~\ref{sec:noise-in-learning-parameters:rate}).
    \item having access to ground truth labels for only a subset of instances instead of the whole ground set $\Instances$ (Section~\ref{sec:noise-in-representation:sampleX}).
    \item having a noisy feature map, i.e., teacher's assumed feature map does not match with $\phi$ used by the learner (Section~\ref{sec:noise-in-representation:feature}).
\end{itemize}

\looseness-1 We denote teacher's view of the imperfect knowledge as $(\widetilde{Q}_0, \widetilde{\eta}, \widetilde{\Examples}, \widetilde{\hstar}, \widetilde{\phi}, \widetilde{\Hypotheses})$. Given this knowledge, the teacher has its own view of quantities such as $\widetilde{Q}(\hypothesis | S)$ (cf., Eq.~\ref{eq:learner-model}),  $\widetilde{\error}(\hypothesis)$ (cf., $\error(.)$ used in Eq.~\ref{eq:error}), and $\widetilde{F}$ (cf., Eq.~\ref{eq:F}) as counterparts to those of a teacher with perfect knowledge. The optimization problem from the viewpoint of the teacher with imperfect knowledge can be written as follows: 
\begin{align}\label{eq:F-opt-imperfect}
    \min_{S \subseteq \widetilde{\Examples}} &|S| \ \text{s.t.} \ \widetilde{F}(S) \geq \sum_{\hypothesis \in \widetilde{\Hypotheses}} \widetilde{Q}_0(\hypothesis) \cdot \widetilde{\error}(h) - \epsilon \cdot \widetilde{Q_0}(\widetilde{\hstar})
\end{align}

In the subsequent sections, we will introduce notions of $\Delta$-imperfect knowledge depending on a set/tuple of parameters $\Delta$.
Let us denote by  $\apx_{\epsilon,\Delta}$ the teaching set found by $\Delta$-imperfect teacher as
a solution to the problem \eqref{eq:F-opt-imperfect}. The following definitions quantify the success of a teacher with imperfect knowledge w.r.t. to measure $M.1$ (related to learner's error) and measure $M.2$ (related to teaching set size).

\begin{definition}[$M.1$-successful]
\label{def:m1}
We say a teacher is $M.1$-successful if the learner's eventual error upon receiving the set $\apx_{\epsilon, \Delta}$ is $O(\epsilon)$ (here we treat the parameters as constant).
\end{definition}

\begin{definition}[$M.2$-successful]
\label{def:m2}
We say a teacher is $M.2$-successful if $|\apx_{\epsilon, \Delta}| \leq |\opt_{\hat{\epsilon}}|$, where $\hat{\epsilon}=\Theta(\epsilon)$ (here we treat the parameters as constant). In other words, the size of the teacher's teaching set
is competitive w.r.t. that of a teacher with perfect knowledge which constructs an optimal teaching set for an $\Theta(\epsilon)$ error threshold.\footnote{This is the style of bound often considered in literature when taking an optimization perspective on teaching \cite{singla2014near,chen18explain}. One might be tempted to directly bound the size $|\apx_{\epsilon, \Delta}|$ as a function of $|\opt_{\epsilon}|$, however this is usually not possible without making further assumptions about the data distribution.}
\end{definition}

\section{Imperfect Knowledge about the Dynamics}
\label{sec:noise-in-learning-parameters}
In this section, we explore the effectiveness of teaching when the teacher has imperfect knowledge of learner's initial knowledge $Q_0$ and learner's learning rate $\eta$. In fact, these two parameters are key to many popular learner models (e.g., the popular Bayesian Knowledge Tracing (BKT) models in educational applications \cite{piech2015deep,klingler2015performance,DBLP:conf/edm/KhajahLM16}, spaced-repetition models used in vocabulary applications \cite{settles2016trainable,hunziker2019teaching}, or gradient learner models studied for data-poisoning attacks \cite{DBLP:conf/icml/LiuDLLRS18}). 

\subsection{Noise in learning prior}
\label{sec:noise-in-learning-parameters:prior}
Here, we consider the setting where the teacher has a noisy estimate $\widetilde{Q_0}$ of learner's initial distribution $Q_0$, i.e.,  $(\widetilde{Q}_0, \widetilde{\eta}, \widetilde{\Examples}, \widetilde{\hstar}, \widetilde{\phi}, \widetilde{\Hypotheses}) := (\widetilde{Q_0}, \eta, \Examples, \hstar, \phi, \Hypotheses)$. The following definition quantifies the noise in $\widetilde{Q_0}$ w.r.t. the true $Q_0$.

\begin{definition}[$\Delta_{Q_0}$-imperfect]\label{def:noise-in-learning-parameters:prior}
Let $\Delta_{Q_0} = (\delta_1, \delta_2)$ for $\delta_1,\delta_2 \geq 0$. We say that teacher's estimated distribution $\widetilde{Q}_0$ is $\Delta_{Q_0}$-imperfect if the following holds: 
\begin{align*}
\forall \ \hypothesis \in \Hypotheses, (1 - \delta_1) \cdot Q_0(\hypothesis) \leq \widetilde{Q_0}(\hypothesis) \leq Q_0(\hypothesis) \cdot (1 + \delta_2).
\end{align*}
\end{definition}
 
 The following theorem quantifies the effectiveness of teaching w.r.t. measures $M.1$ and $M.2$ (see Definitions~\ref{def:m1}, ~\ref{def:m2}).

\begin{theorem}
\label{thm:noise-in-learning-parameters:prior}
Fix $\epsilon \geq 0$, $\delta_1  \geq 0$, and $\delta_2 \geq 0$. Consider a teacher with knowledge $(\widetilde{Q}_0, \eta, \Examples, \hstar, \phi, \Hypotheses)$, where $\widetilde{Q}_0$ is $\Delta_{Q_0}$-imperfect w.r.t. true $Q_0$ for $\Delta_{Q_0}=(\delta_1, \delta_2)$. Then, in the worst-case for any problem setting and any $\Delta_{Q_0}$-imperfect $\widetilde{Q}_0$, the teacher is successful w.r.t. measures $M.1$ and $M.2$ with the following bounds:
\begin{enumerate}
    \vspace{-1mm}
    \item The learner's error is $O(\epsilon)$ and is bounded as $\Error(\widetilde{\opt}_{\epsilon, \Delta_{Q_0}}) \leq \frac{\epsilon \cdot (1 + \delta_2)}{(1 - \delta_1)}$.
    \vspace{-1mm}    
    \item The size of the teaching set is bounded as $|\widetilde{\opt}_{\epsilon, \Delta_{Q_0}}| \leq |\opt_{\hat{\epsilon} }| \textnormal{ where } \hat{\epsilon} = \frac{\epsilon \cdot (1 - \delta_1)}{(1 + \delta_2)}.$
\end{enumerate}
\end{theorem}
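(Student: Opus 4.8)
The plan is to exploit the fact that in this setting only the prior is perturbed, so the multiplicative update factor is shared by the teacher's view and the true learner. Writing $g_S(\hypothesis) := \prod_{s} J(\clabel_s \mid \hypothesis, \instance_s, \eta)$ for the product of likelihoods, Eq.~\eqref{eq:learner-model} gives $Q(\hypothesis \mid S) = Q_0(\hypothesis)\,g_S(\hypothesis)$, and since $\widetilde{\eta}=\eta$ and $\widetilde{\Examples}=\Examples$, the teacher's view is $\widetilde{Q}(\hypothesis \mid S) = \widetilde{Q_0}(\hypothesis)\,g_S(\hypothesis)$ with the \emph{same} factor $g_S$. Two observations drive the argument. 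First, by realizability $\hstar$ agrees with every label in $\Examples$, so $g_S(\hstar)=1$ for every $S$; hence $Q(\hstar \mid S)=Q_0(\hstar)$ and $\widetilde{Q}(\hstar \mid S)=\widetilde{Q_0}(\hstar)$. Second, substituting this factorization into Eq.~\eqref{eq:F} and rearranging, the perfect-teacher constraint in Eq.~\eqref{eq:F-opt} is equivalent to $\sum_{\hypothesis} Q_0(\hypothesis)\,g_S(\hypothesis)\,\error(\hypothesis) \le \epsilon\,Q_0(\hstar)$, and likewise the imperfect constraint in Eq.~\eqref{eq:F-opt-imperfect} is equivalent to $\sum_{\hypothesis} \widetilde{Q_0}(\hypothesis)\,g_S(\hypothesis)\,\error(\hypothesis) \le \epsilon\,\widetilde{Q_0}(\hstar)$.

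For the error bound (M.1), write $\widetilde{S} := \widetilde{\opt}_{\epsilon, \Delta_{Q_0}}$, which is feasible for the imperfect program, so $\sum_{\hypothesis} \widetilde{Q_0}(\hypothesis)\,g_{\widetilde{S}}(\hypothesis)\,\error(\hypothesis) \le \epsilon\,\widetilde{Q_0}(\hstar)$. I would then bound the numerator of $\Error(\widetilde{S})$ by sandwiching the prior: applying $\widetilde{Q_0}(\hypothesis) \ge (1-\delta_1)Q_0(\hypothesis)$ on the left and $\widetilde{Q_0}(\hstar) \le (1+\delta_2)Q_0(\hstar)$ on the right yields $\sum_{\hypothesis} Q_0(\hypothesis)\,g_{\widetilde{S}}(\hypothesis)\,\error(\hypothesis) \le \tfrac{\epsilon(1+\delta_2)}{1-\delta_1}\,Q_0(\hstar)$. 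For the denominator, $\sum_{\hypothesis'} Q_0(\hypothesis')\,g_{\widetilde{S}}(\hypothesis') \ge Q_0(\hstar)\,g_{\widetilde{S}}(\hstar) = Q_0(\hstar)$ by the realizability observation. Dividing numerator by denominator in Eq.~\eqref{eq:error} gives $\Error(\widetilde{S}) \le \tfrac{\epsilon(1+\delta_2)}{1-\delta_1}$, as claimed.

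For the size bound (M.2), the idea is to show that $S^* := \opt_{\hat{\epsilon}}$ is itself feasible for the imperfect program; since $\widetilde{S}$ is a minimum-cardinality feasible set and $S^* \subseteq \Examples = \widetilde{\Examples}$, this immediately yields $|\widetilde{S}| \le |S^*| = |\opt_{\hat{\epsilon}}|$. By definition $S^*$ satisfies the perfect constraint at level $\hat{\epsilon}$, i.e. $\sum_{\hypothesis} Q_0(\hypothesis)\,g_{S^*}(\hypothesis)\,\error(\hypothesis) \le \hat{\epsilon}\,Q_0(\hstar)$. Applying $\widetilde{Q_0}(\hypothesis) \le (1+\delta_2)Q_0(\hypothesis)$ gives $\sum_{\hypothesis} \widetilde{Q_0}(\hypothesis)\,g_{S^*}(\hypothesis)\,\error(\hypothesis) \le (1+\delta_2)\,\hat{\epsilon}\,Q_0(\hstar)$, and the choice $\hat{\epsilon} = \tfrac{\epsilon(1-\delta_1)}{1+\delta_2}$ reduces the right-hand side to $\epsilon(1-\delta_1)Q_0(\hstar) \le \epsilon\,\widetilde{Q_0}(\hstar)$, which is exactly the imperfect feasibility condition. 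Hence $S^*$ is feasible and the size bound follows.

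The calculations are routine multiplicative bookkeeping; the two points requiring care are keeping each sandwich inequality pointing in the direction the argument needs (a lower bound on the prior in the numerator but an upper bound for the target $\hstar$, and the reverse choices in Part~2), and the realizability fact $g_S(\hstar)=1$, which is what pins the denominator of $\Error$ from below and lets the $Q_0(\hstar)$ factors cancel. Everything hinges on the shared update factor $g_S$, which is precisely the feature that distinguishes prior noise from the learning-rate noise treated next.
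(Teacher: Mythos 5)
Your proposal is correct and takes essentially the same approach as the paper: your shared-factor observation $\widetilde{Q}(\hypothesis|S)=\widetilde{Q}_0(\hypothesis)\,g_S(\hypothesis)$ is precisely the content of the paper's Lemma~\ref{lem:Q-vs-QT-parameters:prior} (the sandwich $(1-\delta_1)Q(\hypothesis|S)\leq\widetilde{Q}(\hypothesis|S)\leq(1+\delta_2)Q(\hypothesis|S)$), and your Part~2 --- showing $\opt_{\hat{\epsilon}}$ is feasible for the imperfect teacher's program, hence $|\widetilde{\opt}_{\epsilon,\Delta_{Q_0}}|\leq|\opt_{\hat{\epsilon}}|$ --- is the paper's argument verbatim modulo notation. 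Your Part~1 differs only cosmetically: the paper routes the bound through $\widetilde{\Error}(\widetilde{\opt}_{\epsilon,\Delta_{Q_0}})\leq\epsilon$, whereas you work directly from the feasibility constraint and make explicit the realizability fact $g_S(\hstar)=1$ that lower-bounds the normalizing denominator (a fact the paper uses only implicitly).
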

\vspace{-1mm}  
  
The proofs are provided in the Appendix.


\subsection{Noise in learning rate}
\label{sec:noise-in-learning-parameters:rate}
Next, we consider a setting where the teacher has an  imperfect estimate of the learner's learning rate $\eta$ while having perfect knowledge about the rest of the parameters, i.e., the teacher's knowledge is $(Q_0, \widetilde{\eta}, \Examples, \hstar, \phi, \Hypotheses)$. The following definition quantifies the noise in $\widetilde{\eta}$ w.r.t. true $\eta$.

\begin{definition}[$\Delta_{\eta}$-imperfect]\label{def:noise-in-learning-parameters:rate}
Let $\Delta_\eta = (\delta)$ for $\delta \geq 0$. We say that a teacher's estimate $\widetilde{\eta}$ is $\Delta_\eta$-imperfect if $\abs{\widetilde{\eta} - \eta} \leq \delta$, where both $\widetilde{\eta} \in (0, 1]$ and $\eta \in (0, 1]$.
\end{definition}
The following two worst-case scenarios are of interest: (i) a teacher who overestimates the learning rate with $\widetilde{\eta} = \min\{\eta + \delta, 1\}$ and (ii) a teacher who underestimates the learning rate with $\widetilde{\eta} = \max\{\eta - \delta, 0\}$. The following theorem quantifies the challenges in teaching successfully in this setting.
  
\begin{theorem}
\label{thm:noise-in-learning-parameters:learningrate}
Fix $\epsilon \geq 0$ and $\delta \geq 0$. Consider a teacher with knowledge  $(Q_0, \widetilde{\eta}, \Examples, \hstar, \phi, \Hypotheses)$ where $\widetilde{\eta}$ is $\Delta_{\eta}$-imperfect w.r.t. true $\eta$ for $\Delta_{\eta}=(\delta)$. Then, for any  $\Delta_{\eta}$-imperfect $\widetilde{\eta}$, there exists a problem setting such that the teacher is unsuccessful w.r.t. measures $M.1$ and $M.2$:
\begin{enumerate}
    \item  For any fixed  $\epsilon$ and $\Delta_\eta$, there exist problem settings where $\Error(\widetilde{\opt}_{\epsilon, \Delta_\eta}) \geq \frac{1}{2}$.
    \item For any fixed  $\epsilon$ and $\Delta_\eta$, and any $\hat{\epsilon}$ arbitrarily close to $0$, there exist problem settings where $|\widetilde{\opt}_{\epsilon, \Delta_\eta}| \geq |\opt_{\hat{\epsilon}}|$.
\end{enumerate}
\end{theorem}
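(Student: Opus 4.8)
The plan is to prove both impossibility claims by exhibiting explicit adversarial instances built on a minimal hypothesis class $\Hypotheses = \{\hstar, h_{\mathrm{bad}}\}$ with $h_{\mathrm{bad}} = -\hstar$, so that $\error(\hstar)=0$ and $\error(h_{\mathrm{bad}})=1$ (this is even linearly realizable via $\theta$ and $-\theta$). I take a ground set of $m$ instances, all mislabeled by $h_{\mathrm{bad}}$, a prior $Q_0(\hstar)=q$, $Q_0(h_{\mathrm{bad}})=1-q$ with $q\in(0,1)$ to be tuned, and I treat $q,\eta,m$ as free design parameters. The first step I would record is that, since $\hstar$ agrees with every labeled example and $h_{\mathrm{bad}}$ disagrees with all of them, any set $S$ of $k$ examples yields $Q(\hstar\mid S)=q$ and $Q(h_{\mathrm{bad}}\mid S)=(1-q)(1-\eta)^{k}$. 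Consequently the stopping constraint of Eq.~\eqref{eq:F-opt} collapses to the scalar inequality $(1-q)(1-\eta)^{k}\le \epsilon q$, and the realized error of Eq.~\eqref{eq:error} collapses to $\Error(S)=\tfrac{(1-q)(1-\eta)^{k}}{q+(1-q)(1-\eta)^{k}}$, which is $\ge \tfrac12$ exactly when $(1-q)(1-\eta)^{k}\ge q$. The imperfect teacher's analogues (Eq.~\eqref{eq:F-opt-imperfect}) are identical with $\eta$ replaced by $\widetilde\eta$.

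For the $M.1$ claim I use the over-estimation scenario $\widetilde\eta=\min\{\eta+\delta,1\}$, choosing a small $\eta$ with $\widetilde\eta<1$ so that $r:=\tfrac{1-\eta}{1-\widetilde\eta}>1$ (the boundary $\widetilde\eta=1$ is handled analogously with $k_T=1$). Let $k_T=\lceil \log(1/\epsilon)/\log r\rceil$, so $r^{k_T}\ge 1/\epsilon$. The key lever is that the teacher's set size is governed by $\widetilde\eta$, so I would solve for $q$ so that the teacher's minimal feasible size is exactly $k_T$, namely $(1-q)(1-\widetilde\eta)^{k_T}=\epsilon q$, giving $q=\tfrac{(1-\widetilde\eta)^{k_T}}{\epsilon+(1-\widetilde\eta)^{k_T}}\in(0,1)$ and forcing the teacher to stop at $k_T$. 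Substituting $(1-\eta)^{k_T}=(1-\widetilde\eta)^{k_T}r^{k_T}$ then gives $(1-q)(1-\eta)^{k_T}=\epsilon q\cdot r^{k_T}\ge q$, hence $\Error(\widetilde\opt_{\epsilon,\Delta_\eta})\ge \tfrac12$.

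For the $M.2$ claim I use the under-estimation scenario $\widetilde\eta=\max\{\eta-\delta,0\}$, picking $\eta\in(\delta,1)$ so that $\widetilde\eta=\eta-\delta\in(0,1)$ and the teacher believes learning is slower and over-teaches. In the same family of instances, $|\widetilde\opt_{\epsilon,\Delta_\eta}|$ is the least $k$ with $(1-q)(1-\widetilde\eta)^{k}\le\epsilon q$ and $|\opt_{\hat\epsilon}|$ is the least $k$ with $(1-q)(1-\eta)^{k}\le\hat\epsilon q$; up to rounding these equal $\tfrac{\log((1-q)/\epsilon q)}{\log(1/(1-\widetilde\eta))}$ and $\tfrac{\log((1-q)/\hat\epsilon q)}{\log(1/(1-\eta))}$. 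The plan is to fix any target $\hat\epsilon$ and drive $q\to 0$: both sizes grow like $\log(1/q)$ times a constant, and since $\widetilde\eta<\eta$ we have $\log(1/(1-\widetilde\eta))<\log(1/(1-\eta))$, so the ratio tends to $\tfrac{\log(1/(1-\eta))}{\log(1/(1-\widetilde\eta))}>1$; choosing $q$ small enough forces $|\widetilde\opt_{\epsilon,\Delta_\eta}|\ge|\opt_{\hat\epsilon}|$ for that $\hat\epsilon$, which is the stated failure (and the over-teaching keeps the realized error small, so no extra care is needed for $M.1$ here).

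I expect the main obstacle to be the bookkeeping that makes the two simultaneous requirements in each part consistent once integer set sizes are accounted for: I must pin $q$ so that the teacher's rounded-up minimal size equals (part 1) or dominates (part 2) the relevant target, while the amplification inequality $r^{k_T}\ge 1/\epsilon$ (part 1) or the ratio comparison (part 2) holds for that same integer. I would neutralize this by fixing $\eta$ and the integer $k_T$ first from the amplification/ratio requirement and only then solving for $q$, so the stopping threshold is pinned exactly and ceilings cause no trouble. Verifying the feasibility of this ordering, that $q\in(0,1)$ and $m\ge k_T$ instances suffice, and that $\widetilde\eta$ stays in $(0,1)$ whenever $\delta<1$, is the one genuinely delicate check.
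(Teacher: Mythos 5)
Your proposal is correct and takes essentially the same route as the paper's proof: the identical two-hypothesis instance $\{\hstar,\bar{h}\}$ with $\error(\bar{h})=1$, $\error(\hstar)=0$, the same tuned prior (your condition $(1-q)(1-\widetilde{\eta})^{k_T}=\epsilon q$ is exactly the paper's $Q_0(\bar{h})/Q_0(\hstar)=\epsilon/(1-\widetilde{\eta})^{k}$), and the same overestimate/underestimate case split with the same choice of $k_T$ for part 1. The only cosmetic difference is in part 2, where the paper fixes $k=\lceil \log(\epsilon/\hat{\epsilon})/\log(\tfrac{1-\widetilde{\eta}}{1-\eta})\rceil$ so that both teaching sets have size exactly $k$, whereas you drive $q\to 0$ and compare asymptotic growth rates of the two sizes; both arguments are valid.
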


Comparing Theorem~\ref{thm:noise-in-learning-parameters:prior} and Theorem~\ref{thm:noise-in-learning-parameters:learningrate}, these results suggest that noise in the teacher's assumption about the learning rate is a lot more hazardous compared to noise about the learner's initial distribution. While we derived these results by focusing on a very specific task and learner model, similar observations were made in the context of a different type of teaching setting when teaching a gradient learner \cite{DBLP:conf/aaai/YeoKSMAFDC19}.

Theorem~\ref{thm:noise-in-learning-parameters:learningrate} only provides a pessimistic view that teacher can fail badly. On closer inspection, the negative results arise from two separate issues: (i) teacher computing wrong utility of examples in \eqref{eq:F-opt-imperfect}, and (ii) teacher having a wrong estimate of stopping criteria in \eqref{eq:F-opt-imperfect} which in turn depends on learner's progress. Empirically, we found that the second reason seems to be the dominant one for the teacher's failure. One practical way to fix this issue is to develop an interactive teaching strategy where the teacher's stopping criteria is determined by the learner's true  progress measured in an online fashion instead of the progress as estimated by the teacher using its offline model (also, see discussions in Section~\ref{sec:related}).
\begin{figure*}[t!]
\centering
	\begin{subfigure}[b]{0.31\textwidth}
	    \centering
		\includegraphics[width=0.7\linewidth]{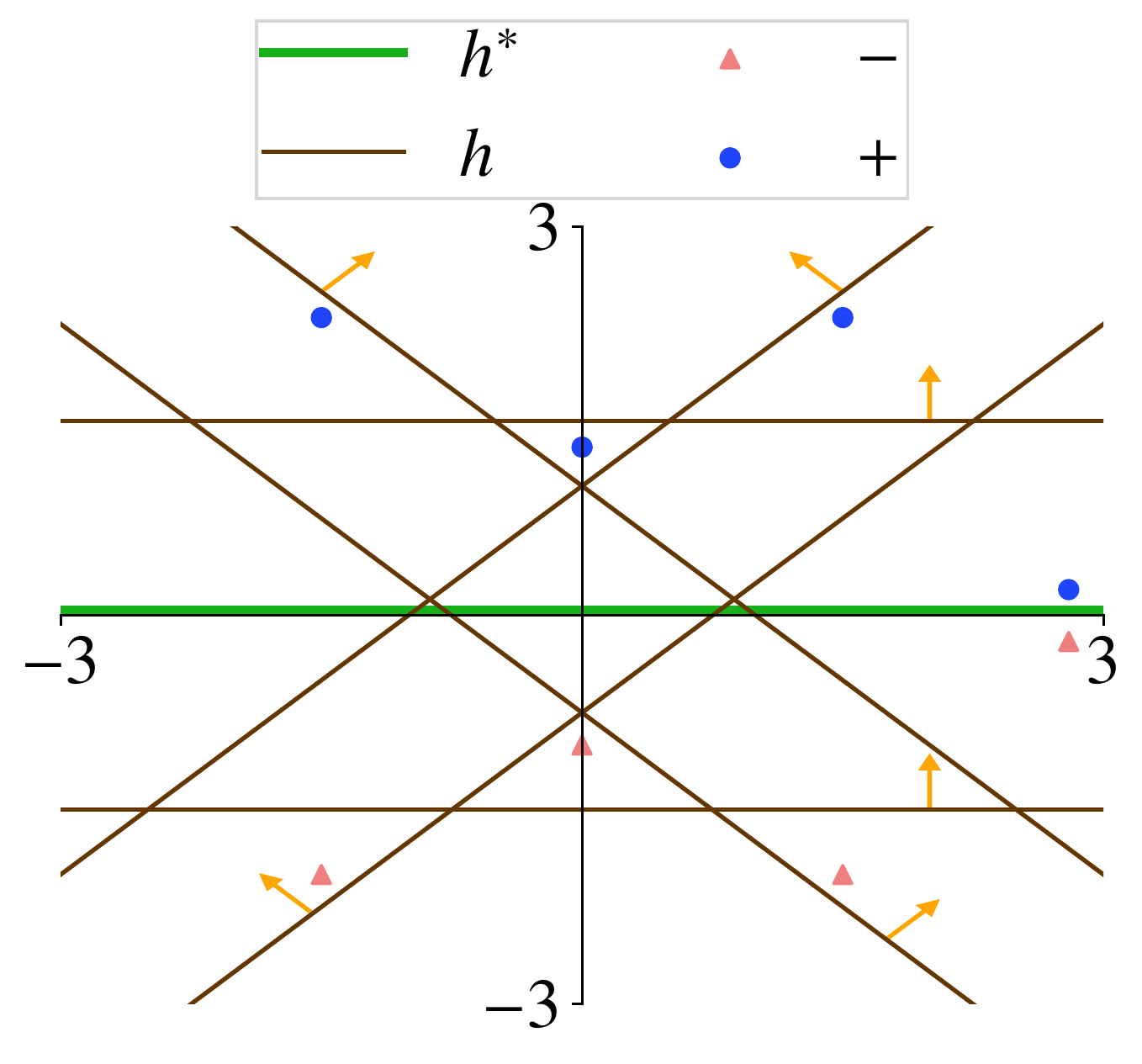}
		\caption{Problem with a few extreme data points}
		\label{fig:environments.bad1}
	\end{subfigure}		
	\quad
	\begin{subfigure}[b]{0.31\textwidth}
	    \centering
		\includegraphics[width=0.7\linewidth]{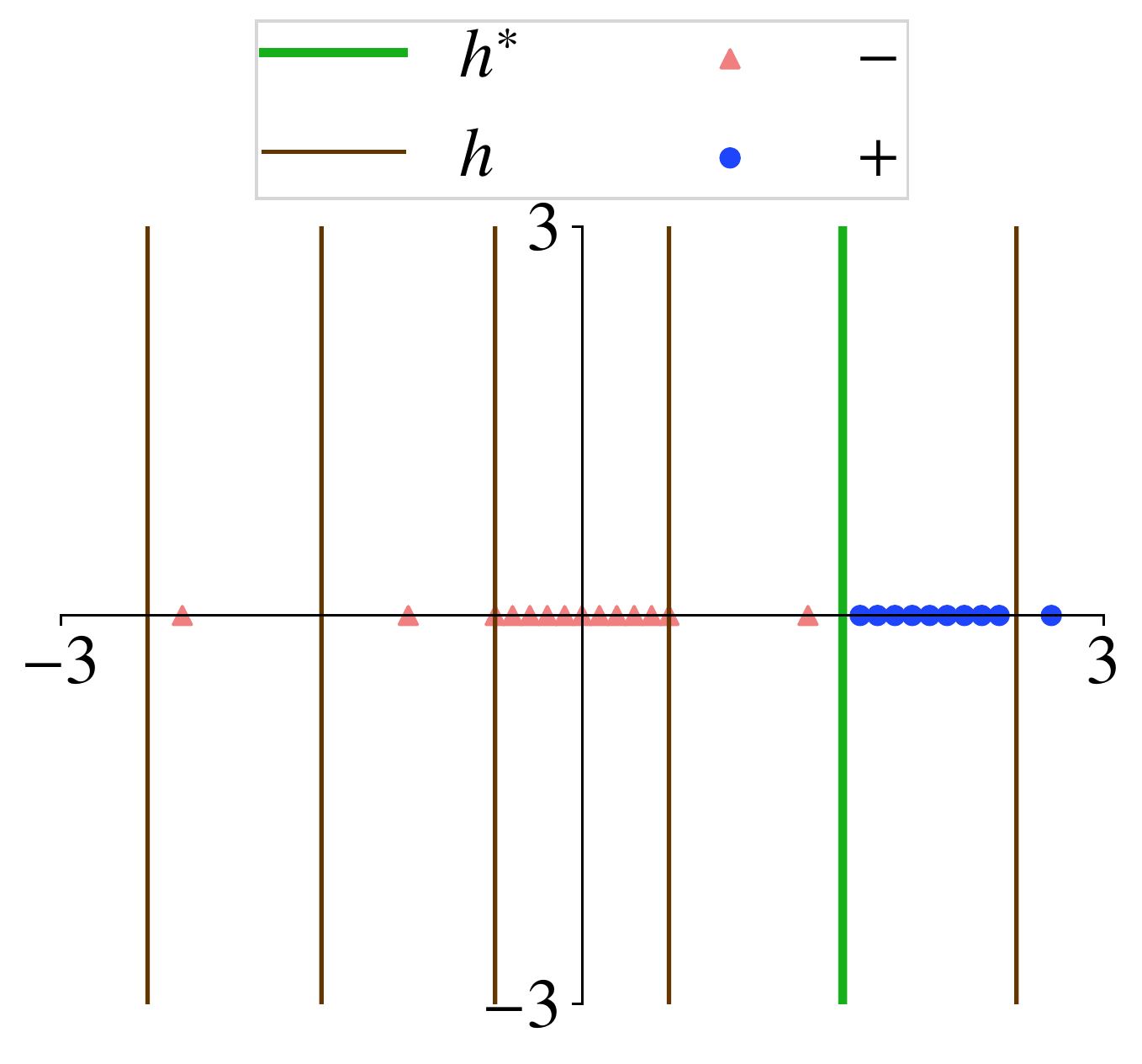}		
		\caption{Problem with skewed data distribution}
		\label{fig:environments.bad2}
	\end{subfigure}	
	\quad
	\begin{subfigure}[b]{0.31\textwidth}
	    \centering
		\includegraphics[width=0.7\linewidth]{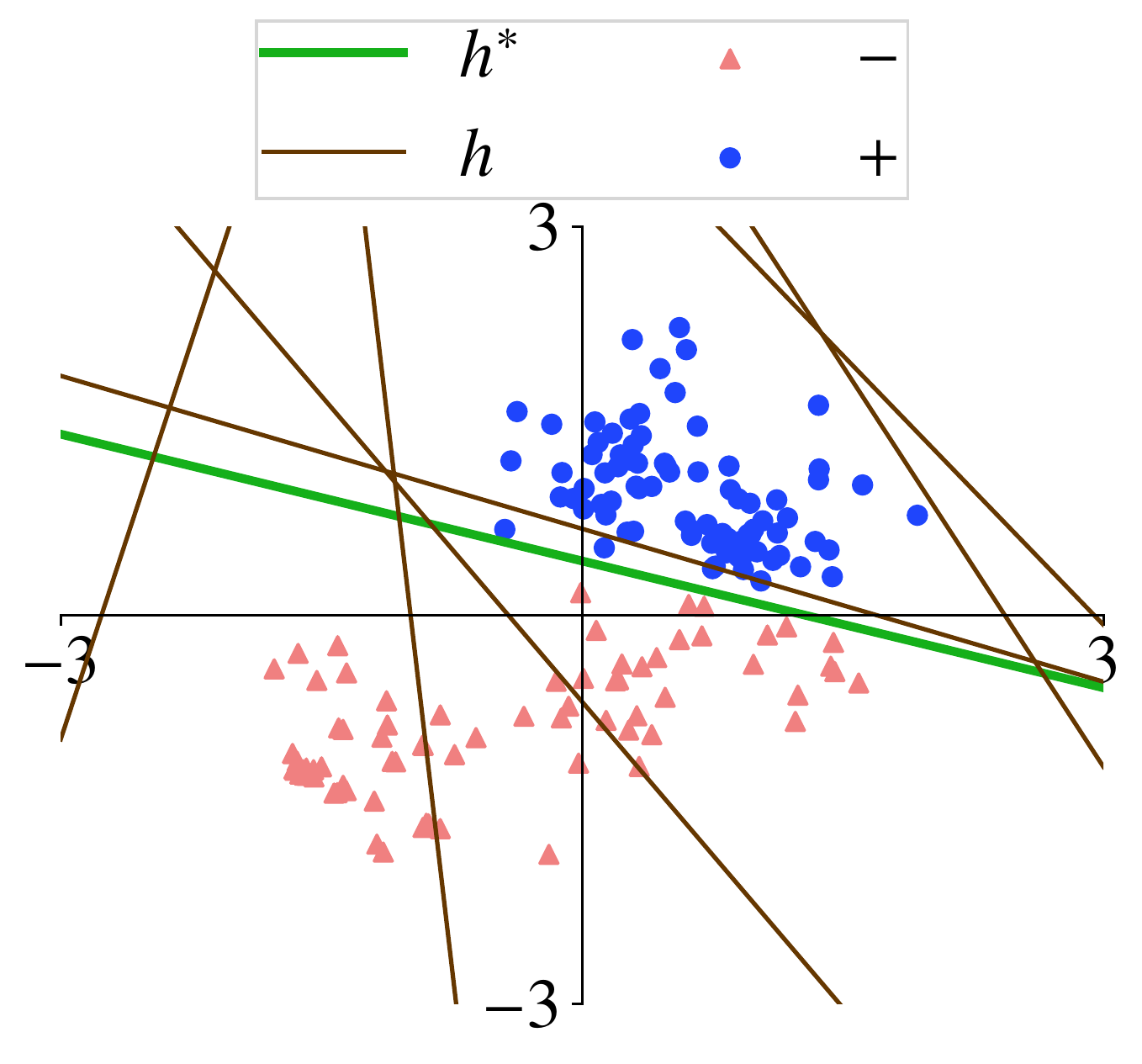}		
		\caption{Real-world problem}
		\label{fig:environments.butterflymoth}
	\end{subfigure}		
	\vspace{-3mm}
   \caption{\textbf{(a)} shows a  problem setting with a few extreme points that are important for teaching---assuming $\eta=1$ and $\epsilon=0$ for simplicity of arguments, the optimal teaching set consists of only two examples lying close to coordinates $(+3,0)$. However, if these two examples are not present in $\widetilde{\Examples}$, the teaching set can be arbitrarily large (i.e., of size 6 in the illustration). \textbf{(b)} shows another problem setting with skewed data distribution where a small perturbation of data could lead to big changes in prediction of hypotheses. \textbf{(c)} shows a real-world problem setting to distinguish animal species. As can be seen, the data distribution here is more ``well-behaved" and does not suffer from issues present in the other two problem settings. Note that only a few hypotheses are shown in the illustration, see Section~\ref{sec:experiments} for more details).
  }
\label{fig:environments}
\end{figure*}

\section{Imperfect Knowledge about Representation}
\label{sec:noise-in-representation}
In this section,  we explore the effect of teaching when the teacher has imperfect knowledge of the task specification, in particular, limited ground truth data and noisy representation of the task  used by the learner.  


\subsection{Limited ground truth labels}\label{sec:noise-in-representation:sampleX}
Here, we consider the setting where teacher has ground truth labels for only a subset of examples $\widetilde{\Examples} \subseteq \Examples$. The typical process followed when applying machine teaching algorithms is to first sample a small set of instances $\widetilde{\Instances} \subseteq \Instances$ and then get expert annotations to obtain $\widetilde{\Examples} \subseteq \Examples$ (e.g., see \cite{singla2014near,mac2018teaching}. Then, the teacher selects a hypothesis $\widetilde{\hstar}$ as the one with minimal empirical error given by $\widetilde{\hstar} \in \argmin_{\hypothesis \in \Hypotheses} \widetilde{\error}(\hypothesis)$. For this setting, we represent the knowledge of the teacher as $(Q_0, \eta, \widetilde{\Examples}, \widetilde{\hstar}, \phi, \Hypotheses)$. 

\looseness -1 As long as the set  $\widetilde{\Examples}$ is constructed \emph{i.i.d.}, the teacher can construct teaching sets to ensure that the learner's error would be low (i.e., teaching is successful w.r.t. measure $M.1$).  This argument follows from the standard concentration inequalities which ensures that with high probability, the teacher has a good estimate of $\widetilde{\error}(\cdot)$, i.e., $\forall \ \hypothesis \in \Hypotheses, |\widetilde{\error}(h) - \textnormal{err}(h)|$ is small (see  Theorem~\ref{thm:noise-in-representation:sampleX2}). However, regarding teacher's performance on measure $M.2$, without any additional assumptions about data distribution, it is easy to construct a pessimistic scenario 
where the data distribution is skewed and the teaching set $\widetilde{\opt}_{\epsilon, \Delta}$ constructed by a teacher with imperfect knowledge is arbitrarily large w.r.t. the optimal teaching set  $\opt_{\epsilon}$, see Figure~\ref{fig:environments}.

Building on insights from the problem settings discussed in Figure~\ref{fig:environments}, we consider additional structural assumptions on the problem setting as discussed below. First, we introduce the notion of $\delta$-perturbed set of examples.

\begin{definition}[$\delta$-perturbed]
\label{def:delta_perturbation}
Consider a set of labeled examples $S \subseteq \Examples$.  We call $S'$ a $\delta$-perturbed version of $S$, if there exists a bijective map $S \mapsto S', (\instance, \clabel) \mapsto (\instance', \clabel)$ such that $\lVert\phi(\instance) - \phi(\instance')\rVert_2 \leq \delta$.
\end{definition} 




We will also need the following smoothness notion for proving robustness guarantees (for bounding the size in Theorem~\ref{thm:noise-in-representation:sampleX2} and for bounding both the error/size in Theorem~\ref{thm:noise-in-representation:feature}). 
\begin{definition}[$\lambda$-smoothness] \label{def:noise-in-representation:smoothness}
Let $\delta \geq 0$, $\lambda \geq 0$. Consider any set $S \subseteq \Examples$, and let $S'$ be any $\delta$-perturbed version of $S$. Then, we call the problem setting $\lambda$-smooth if for any $\hypothesis \in \Hypotheses$, the mismatch in labels assigned by  $\hypothesis$ to examples $S$ and $S'$ is upper-bounded by $\lambda\cdot\delta$.
\end{definition}



Definition~\ref{def:noise-in-representation:sampleX} below quantifies the imperfection in teacher's knowledge arising from the sampling process coupled with additional structural conditions. 

\begin{definition}[$\Delta_{\Examples}$-imperfect]\label{def:noise-in-representation:sampleX}
Let $\Delta_{\Examples} = (\delta_1, \delta_2, \delta_3)$ for $\delta_1, \delta_2, \delta_3 \geq 0$. We say that a teacher's knowledge is $\Delta_{\Examples}$-imperfect if the following statements hold with probability at least $(1 - \delta_1)$:
\setlength{\leftmargini}{0.62em}
\begin{itemize}
\item $\forall \ \hypothesis, |\widetilde{\error}(h) - \textnormal{err}(h)| \leq \delta_2$,
\item for any set of labeled examples $S \subseteq \Examples$ with $|S| \leq |\widetilde{\Examples}|$, there exists a $\delta_3$-perturbed version  of  $S$ in $\widetilde{\Examples}$. 
\end{itemize}
\setlength{\leftmargini}{2.5em}
\end{definition}

Note that in the above definition, the bound on error is satisfied from the \emph{i.i.d.} sampling process and doesn't require any further structural assumption. The second condition implicitly adds regularity conditions on the underlying data distribution which should not have characteristics as seen in Figure~\ref{fig:environments.bad1} and Figure~\ref{fig:environments.bad2}. 
The following theorem quantifies the effectiveness of a $\Delta_{\Examples}$-imperfect teacher.

\begin{theorem}
\label{thm:noise-in-representation:sampleX2}
Fix $\epsilon \geq 0$ and $\Delta_{\Examples} = (\delta_1, \delta_2, \delta_3)$ with $\delta_1, \delta_2, \delta_3  \geq 0$. Consider a $\Delta_{\Examples}$-imperfect teacher with knowledge $(Q_0, \eta, \widetilde{\Examples}, \widetilde{\hstar}, \phi, \Hypotheses)$. Assume the problem setting is $\lambda$-smooth for some $\lambda \geq 0$, $\eta < 1$, and $|\widetilde{\Examples}|$ is sufficiently large. 
Then, 
for any sample $\widetilde{\Examples}$ and selection of $\widetilde{\hstar}$, with probability at least $(1 - \delta_1)$, the teacher is successful with the following bounds:
\begin{enumerate}
    \item The learner's error is $O(\epsilon)$ and is bounded as $\Error(\widetilde{\opt}_{\epsilon, \Delta_{\Examples}}) \leq \frac{(\epsilon \cdot Q_{\textnormal{max}}  + \delta_2)}{Q(h^*)}.$
   \item The size of the teaching set is bounded as $|\widetilde{\opt}_{\epsilon, \Delta_{\Examples}}| \leq |\opt_{\hat{\epsilon} }|$ where $\hat{\epsilon} = \frac{(\epsilon \cdot Q_{\textnormal{min}} - \delta_2) \cdot (1 - \eta)^{\lambda \cdot \delta_3}}{Q(h^*)}$
\end{enumerate}
where $Q_{\max} = \max_{h} Q_0(h)$ and $Q_{\min} = \min_{h} Q_0(h)$.
%
\end{theorem}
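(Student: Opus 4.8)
The plan is to prove both claims by first rewriting the imperfect teacher's feasibility constraint in \eqref{eq:F-opt-imperfect} in a cleaner equivalent form. Since the teacher here has the true $Q_0$ and $\eta$, its score updates coincide with the learner's, i.e.\ $\widetilde{Q}(h\mid S)=Q(h\mid S)$ for every $S\subseteq\widetilde{\Examples}$; moreover, by realizability $h^*$ is consistent with every labeled example, so $Q(h^*\mid S)=Q_0(h^*)$ for all $S$. Expanding $\widetilde{F}$ via \eqref{eq:F} and cancelling the common term $\sum_h Q_0(h)\widetilde{\error}(h)$, the constraint reduces to $\sum_h Q(h\mid S)\,\widetilde{\error}(h)\le \epsilon\,Q_0(\widetilde{\hstar})$. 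From \eqref{eq:learner-model}--\eqref{eq:J-expo} I will also use $Q(h\mid S)=Q_0(h)(1-\eta)^{m(h,S)}$, where $m(h,S)$ is the number of examples in $S$ on which $h$ errs, and the distributional fact $\sum_h Q(h\mid S)\le\sum_h Q_0(h)=1$.

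For the error bound (M.1), I apply the equivalent constraint to $S=\widetilde{\opt}_{\epsilon,\Delta_{\Examples}}$, which is feasible by construction, giving $\sum_h Q(h\mid \widetilde{\opt})\,\widetilde{\error}(h)\le \epsilon Q_0(\widetilde{\hstar})$. Using the first clause of Definition~\ref{def:noise-in-representation:sampleX} in the form $\error(h)\le\widetilde{\error}(h)+\delta_2$, together with $\sum_h Q(h\mid \widetilde{\opt})\le 1$ and $Q_0(\widetilde{\hstar})\le Q_{\textnormal{max}}$, I get $\sum_h Q(h\mid \widetilde{\opt})\,\error(h)\le \epsilon Q_{\textnormal{max}}+\delta_2$. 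Dividing by the normalizer of \eqref{eq:error} and lower-bounding it by $Q(h^*\mid\widetilde{\opt})=Q_0(h^*)=Q(h^*)$ yields $\Error(\widetilde{\opt})\le (\epsilon Q_{\textnormal{max}}+\delta_2)/Q(h^*)$. No smoothness is needed for this part.

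For the size bound (M.2), the strategy is to exhibit a feasible solution of \eqref{eq:F-opt-imperfect} of size exactly $|\opt_{\hat{\epsilon}}|$; minimality of $\widetilde{\opt}$ then gives the claim. Let $T^*=\opt_{\hat{\epsilon}}$; since $|\widetilde{\Examples}|$ is sufficiently large we have $|T^*|\le|\widetilde{\Examples}|$, so the second clause of Definition~\ref{def:noise-in-representation:sampleX} provides a label-preserving $\delta_3$-perturbed copy $T'\subseteq\widetilde{\Examples}$ with $|T'|=|T^*|$. Because the perturbation preserves labels, $\lambda$-smoothness (Definition~\ref{def:noise-in-representation:smoothness}) bounds the change in prediction mismatches, hence $|m(h,T^*)-m(h,T')|\le\lambda\delta_3$ for every $h$; combined with $0<1-\eta<1$ this gives the key inequality $Q(h\mid T')\le Q(h\mid T^*)\,(1-\eta)^{-\lambda\delta_3}$. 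Optimality of $T^*$ at threshold $\hat{\epsilon}$ gives $\sum_h Q(h\mid T^*)\error(h)\le \hat{\epsilon}\,Q_0(h^*)$; chaining this with the key inequality and then $\widetilde{\error}(h)\le\error(h)+\delta_2$ yields $\sum_h Q(h\mid T')\widetilde{\error}(h)\le (1-\eta)^{-\lambda\delta_3}\hat{\epsilon}\,Q_0(h^*)+\delta_2$. Substituting $\hat{\epsilon}=(\epsilon Q_{\textnormal{min}}-\delta_2)(1-\eta)^{\lambda\delta_3}/Q(h^*)$ collapses the right-hand side to $\epsilon Q_{\textnormal{min}}\le \epsilon Q_0(\widetilde{\hstar})$, which is precisely the equivalent feasibility constraint. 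Thus $T'$ is feasible and $|\widetilde{\opt}|\le|T'|=|\opt_{\hat{\epsilon}}|$.

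I expect the M.2 step to be the main obstacle: correctly transferring the guarantee from the unperturbed set $T^*\subseteq\Examples$ (which the imperfect teacher cannot use) to its perturbed copy $T'\subseteq\widetilde{\Examples}$, and checking that the smoothness loss factor $(1-\eta)^{-\lambda\delta_3}$ and the $\delta_2$ slack combine to reproduce exactly the stated $\hat{\epsilon}$. Two book-keeping points require care: that the perturbed examples carry genuine ground-truth labels (so $T'$ is a legitimate subset of $\widetilde{\Examples}$ and the smoothness bound on prediction mismatches indeed controls the change in mistake counts $m(h,\cdot)$), and that the claim is implicitly in the regime $\epsilon Q_{\textnormal{min}}\ge\delta_2$ so that $\hat{\epsilon}\ge 0$. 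All statements hold on the probability-$(1-\delta_1)$ event of Definition~\ref{def:noise-in-representation:sampleX}.
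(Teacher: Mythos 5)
Your proposal is correct and follows essentially the same route as the paper's proof: part one uses feasibility of $\widetilde{\opt}_{\epsilon,\Delta_{\Examples}}$ plus $|\widetilde{\error}(h)-\error(h)|\leq\delta_2$, $\sum_h Q(h\mid S)\leq 1$, and the lower bound $Q(h^*\mid S)=Q_0(h^*)$ on the normalizer; part two exhibits the $\delta_3$-perturbed copy of $\opt_{\hat{\epsilon}}$ inside $\widetilde{\Examples}$, controls its scores via the smoothness inequality $Q(h\mid S')\leq Q(h\mid S)(1-\eta)^{-\lambda\delta_3}$, and verifies feasibility so minimality of $\widetilde{\opt}_{\epsilon,\Delta_{\Examples}}$ concludes. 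The only differences are presentational (you cancel the common term in the constraint up front, and you make explicit the book-keeping points about label preservation and the regime $\epsilon Q_{\min}\geq\delta_2$ that the paper leaves implicit).
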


Note that the bound is only valid for $\eta < 1$. When $\eta$ approaches 1 and for extreme case $\eta = 1$, the learner reduces to a noise-free version space learner who eliminates all hypothesis immediately. For this setting, bounding the teaching set size requires more combinatorial assumptions on the dataset (e.g., based on separability of data from the hyperplanes)---however, for practical applications, $\eta$ bounded away from $1$ is a more natural setting as analyzed in this theorem.

\subsection{Noise in feature embedding} \label{sec:noise-in-representation:feature}

Here, we consider imperfect knowledge in terms of noisy feature map $\widetilde{\phi}$. This is a challenging setting as noise in $\phi$ means error in the predictions of hypotheses $\hypothesis \in \Hypotheses$ which in turn leads to noise in error of hypotheses $\error(.)$ and in the likelihood function $J$.  As noted earlier, the teacher will select a hypothesis $\widetilde{\hstar}$ as the one with minimal error given by $\widetilde{\hstar} \in \argmin_{\hypothesis \in \Hypotheses} \widetilde{\error}(\hypothesis)$.  The following definition quantifies the imperfection in the teacher's knowledge $(Q_0, \eta, \Examples, \widetilde{\hstar}, \widetilde{\phi}, \Hypotheses)$.


\begin{definition}[$\Delta_{\phi}$-imperfect]\label{def:noise-in-representation:feature}
Let $\Delta_{\phi} = (\delta_1, \delta_2)$ for $\delta_1, \delta_2 \geq 0$. We say that a teacher's knowledge is $\Delta_{\phi}$-imperfect if the following holds: 
\setlength{\leftmargini}{0.62em}
\begin{itemize}
\item $\forall \ \instance \in \Instances, ||\phi(x) - \widetilde{\phi}(x)||_2 \leq \delta_1$,
\item $\forall \ \hypothesis, |\widetilde{\error}(h) - \textnormal{err}(h)| \leq \delta_2$.
\end{itemize}
\setlength{\leftmargini}{2.5em}

\end{definition}

The following theorem quantifies the effectiveness of teaching of a $\Delta_\phi$-imperfect teacher.

\begin{theorem}
\label{thm:noise-in-representation:feature}
\looseness-1
Fix $\epsilon \geq 0$ and $\Delta_{\phi} = (\delta_1, \delta_2)$ with $\delta_1, \delta_2  \geq 0$. Consider a $\Delta_{\phi}$-imperfect teacher with knowledge $(Q_0, \eta, \Examples, \widetilde{\hstar}, \widetilde{\phi}, \Hypotheses)$.  Assume the problem setting is $\lambda$-smooth for some $\lambda \geq 0$, that $\eta < 1$, and assume that the error $\widetilde{err}(\widetilde{\hstar}) = 0$.  Then, in the worst-case for any observed $\widetilde{\phi}$ and selection of $\widetilde{\hstar}$, the teacher is successful with the following bounds:
\begin{enumerate}
    \item The learner's error is $O(\epsilon)$ and is bounded as $\Error(\widetilde{\opt}_{\epsilon, \Delta_{\phi}}) \leq \frac{(\epsilon \cdot Q_{\textnormal{max}}  + \delta_2)}{Q(h^*)\cdot (1 - \eta)^{\lambda \cdot \delta_1}}.$
   \item The size of the teaching set is bounded as $|\widetilde{\opt}_{\epsilon, \Delta_{\phi}}| \leq |\opt_{\hat{\epsilon} }|$ where $\hat{\epsilon} = \frac{(\epsilon \cdot Q_{\textnormal{min}} - \delta_2) \cdot (1 - \eta)^{\lambda \cdot  \delta_1}}{Q(h^*)}$.
\end{enumerate}
\end{theorem}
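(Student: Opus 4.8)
The plan is to reduce everything to a single \emph{multiplicative discrepancy lemma} relating the teacher's scores $\widetilde{Q}(h\mid S)$ to the true scores $Q(h\mid S)$, and then push the teacher's feasibility constraint through this lemma to control both the true error and the teaching-set size. Write $m(h,S)=|\{s:\hypothesis(\instance_s)\neq \clabel_s\}|$ for the number of disagreements of $\hypothesis$ on $S$ under the true map $\phi$, and $\widetilde{m}(h,S)$ for the analogous count under $\widetilde{\phi}$; by Eq.~\eqref{eq:learner-model} and Eq.~\eqref{eq:J-expo}, and since the prior is perfect here, $Q(h\mid S)=Q_0(h)(1-\eta)^{m(h,S)}$ and $\widetilde{Q}(h\mid S)=Q_0(h)(1-\eta)^{\widetilde{m}(h,S)}$. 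Since replacing $\phi$ by $\widetilde{\phi}$ moves every feature vector by at most $\delta_1$, the feature-perturbed examples form a $\delta_1$-perturbed version of $S$ (Definition~\ref{def:delta_perturbation}), so $\lambda$-smoothness (Definition~\ref{def:noise-in-representation:smoothness}) gives $|m(h,S)-\widetilde{m}(h,S)|\le \lambda\delta_1$ for every $\hypothesis$. As $0<1-\eta<1$, this yields
\[
(1-\eta)^{\lambda\delta_1}\;\le\;\frac{\widetilde{Q}(h\mid S)}{Q(h\mid S)}\;\le\;(1-\eta)^{-\lambda\delta_1},
\]
which, together with $|\widetilde{\error}(h)-\error(h)|\le\delta_2$ and $\sum_h Q(h\mid S),\ \sum_h\widetilde Q(h\mid S)\le\sum_h Q_0(h)=1$, are the only facts I will need.

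\textbf{Error bound (Part 1).} Let $S^\star=\widetilde{\opt}_{\epsilon,\Delta_\phi}$. Its feasibility in \eqref{eq:F-opt-imperfect} (with $\widetilde{Q}_0=Q_0$) unwinds, after cancelling the $\sum_h Q_0(h)\widetilde\error(h)$ terms, to $\sum_h \widetilde{Q}(h\mid S^\star)\widetilde\error(h)\le \epsilon\, Q_0(\widetilde{\hstar})\le \epsilon\,Q_{\max}$. To lower-bound the normalizer of $\Error(S^\star)$, I use that the true target $\hstar$ is realizable, so $\error(\hstar)=0$, hence $m(\hstar,S^\star)=0$ and $Q(\hstar\mid S^\star)=Q_0(\hstar)=Q(\hstar)$; therefore $\sum_{h'}Q(h'\mid S^\star)\ge Q(\hstar)$. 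For the numerator I substitute $Q(h\mid S^\star)\le \widetilde{Q}(h\mid S^\star)(1-\eta)^{-\lambda\delta_1}$ and $\error(h)\le\widetilde\error(h)+\delta_2$, pull out the factor $(1-\eta)^{-\lambda\delta_1}$, and bound the two resulting sums by $\epsilon Q_{\max}$ and $\delta_2$ respectively. Dividing numerator by denominator gives exactly $\Error(S^\star)\le (\epsilon Q_{\max}+\delta_2)/\big(Q(\hstar)(1-\eta)^{\lambda\delta_1}\big)$.

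\textbf{Size bound (Part 2).} I argue by feasibility transfer: it suffices to show that every $S\subseteq\Examples=\widetilde\Examples$ feasible for the perfect teacher at threshold $\hat\epsilon$ is also feasible for the imperfect teacher at threshold $\epsilon$; then $\opt_{\hat\epsilon}$ itself is feasible for \eqref{eq:F-opt-imperfect}, and minimality of $\widetilde{\opt}_{\epsilon,\Delta_\phi}$ forces $|\widetilde{\opt}_{\epsilon,\Delta_\phi}|\le|\opt_{\hat\epsilon}|$. The perfect constraint reads $\sum_h Q(h\mid S)\error(h)\le \hat\epsilon\,Q_0(\hstar)$. To reach the imperfect constraint $\sum_h\widetilde Q(h\mid S)\widetilde\error(h)\le \epsilon\,Q_0(\widetilde\hstar)$, I first split off the error noise via $\widetilde\error(h)\le\error(h)+\delta_2$, bounding the $\delta_2$ piece directly by $\delta_2\sum_h\widetilde Q(h\mid S)\le\delta_2$, and apply the multiplicative lemma only to $\sum_h\widetilde Q(h\mid S)\error(h)\le (1-\eta)^{-\lambda\delta_1}\sum_h Q(h\mid S)\error(h)$. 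Plugging in the perfect constraint and the stated $\hat\epsilon=(\epsilon Q_{\min}-\delta_2)(1-\eta)^{\lambda\delta_1}/Q(\hstar)$ together with $Q_{\min}\le Q_0(\widetilde\hstar)$ closes the inequality.

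\textbf{Main obstacle.} The delicate step is the multiplicative discrepancy lemma: correctly interpreting the change of feature map as a $\delta_1$-perturbation of the example set and invoking $\lambda$-smoothness to bound the per-hypothesis change in disagreement count by $\lambda\delta_1$ (rather than something scaling with $|S|$), which is precisely what turns additive feature noise into the multiplicative factor $(1-\eta)^{\pm\lambda\delta_1}$. The remaining work is careful bookkeeping of which quantities are the teacher's ($\widetilde Q,\widetilde\error$) versus the truth ($Q,\error$) and the direction of each inequality; in particular the slightly asymmetric handling of the lower-order $\delta_2$ term in the two parts (folding it inside the $(1-\eta)^{-\lambda\delta_1}$ factor in Part~1 but keeping it outside in Part~2) is what lands the constants exactly as stated. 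The assumptions $\eta<1$ and $\widetilde\error(\widetilde\hstar)=0$ are used to keep $(1-\eta)^{\lambda\delta_1}$ meaningful and the teacher's stopping criterion realizable; the regime $\eta\to1$ is genuinely excluded, matching the remark after Theorem~\ref{thm:noise-in-representation:sampleX2}.
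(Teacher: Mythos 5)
Your proposal is correct and follows essentially the same route as the paper's proof: the sandwich $(1-\eta)^{\lambda\delta_1}\le \widetilde{Q}(h\mid S)/Q(h\mid S)\le (1-\eta)^{-\lambda\delta_1}$ (obtained by viewing the change of feature map as a $\delta_1$-perturbation and invoking $\lambda$-smoothness) is exactly the paper's key inequality, Part~1 mirrors the paper's chain (feasibility of $\widetilde{\opt}_{\epsilon,\Delta_\phi}$, the $\delta_2$ error correction, the multiplicative factor, and lower-bounding the normalizer by $Q_0(\hstar)$ via realizability), and Part~2 is the paper's feasibility-transfer argument showing $\opt_{\hat\epsilon}$ satisfies the imperfect teacher's constraint, with identical constants. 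The only differences are cosmetic (working with the cancelled form of the constraints throughout and reordering where the $\delta_2$ term is absorbed), so this is the same proof.
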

In comparison to the error bound in Theorem~\ref{thm:noise-in-representation:sampleX2}, the error bound here with noise in $\phi$ is much worse---this is a lot more challenging setting given that hypotheses predictions on examples can be wrong in this setting. Here, for simplicity of the proof and presentation of results, we assumed that there exists some $\widetilde{h^*}$ for which error  in teacher's representation is $0$, i.e., $\widetilde{\error}(\widetilde{h^*}) = 0$, see discussion in Footnote~\ref{footnote:realizable}. The theorem suggests that when considering additional structural/smoothness assumptions on the problem, the teaching with imperfect knowledge about representations is robust w.r.t. both $M.1$ and $M.2$ success criteria. As we shall see in experiments, these robustness guarantees indeed hold in practice given that the real-world problem settings often respect these regularity assumptions.

\begin{figure*}[t!]
\centering
	\begin{subfigure}[b]{0.24\textwidth}
	   \centering
		\includegraphics[width=1\linewidth]{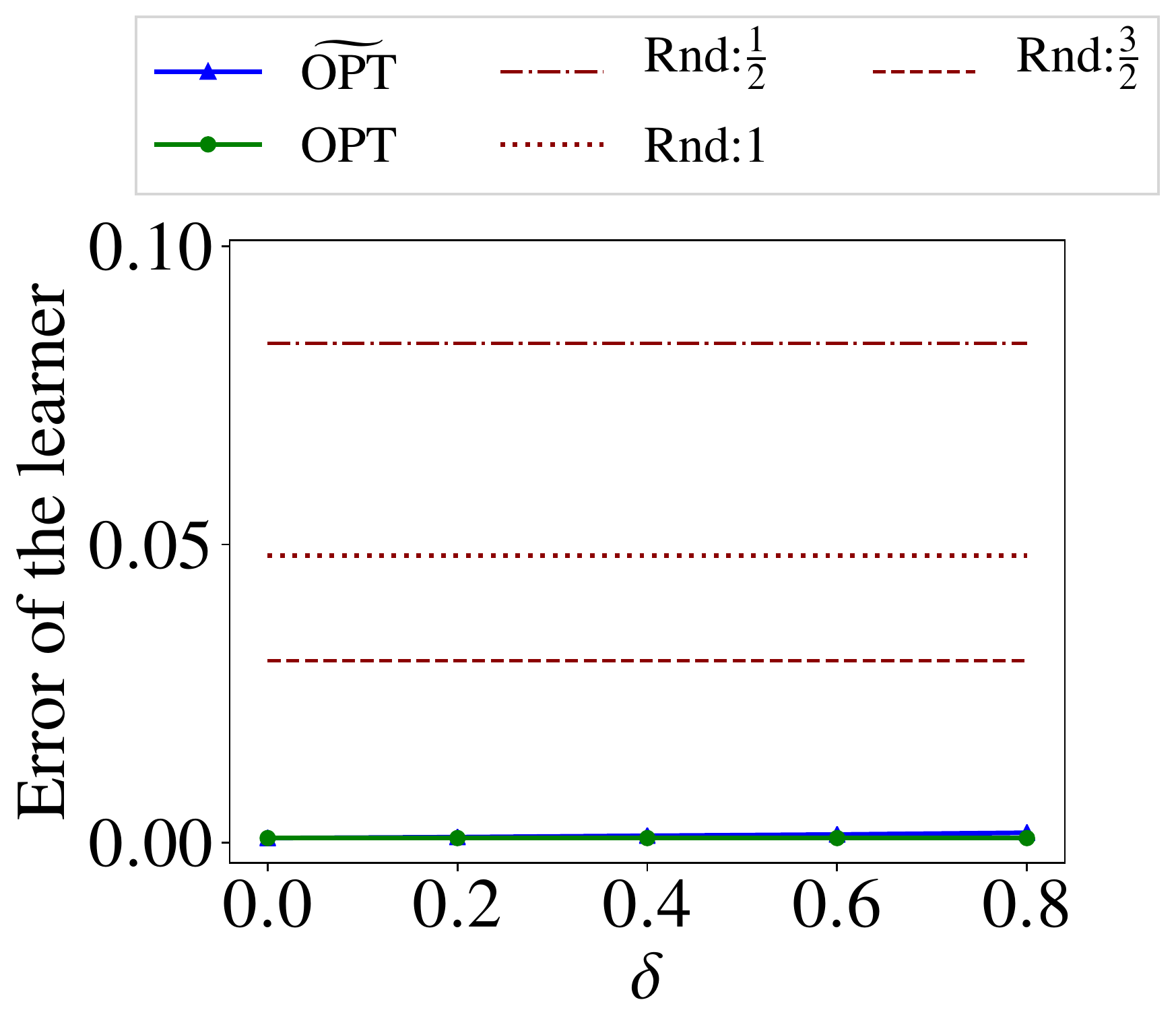}
		\vspace{-5mm}
		\caption{Learner's error: $\widetilde{Q_0}$}
		\label{fig:results.1}
	\end{subfigure}
	\quad
	\begin{subfigure}[b]{0.24\textwidth}
	    \centering
		\includegraphics[width=1\linewidth]{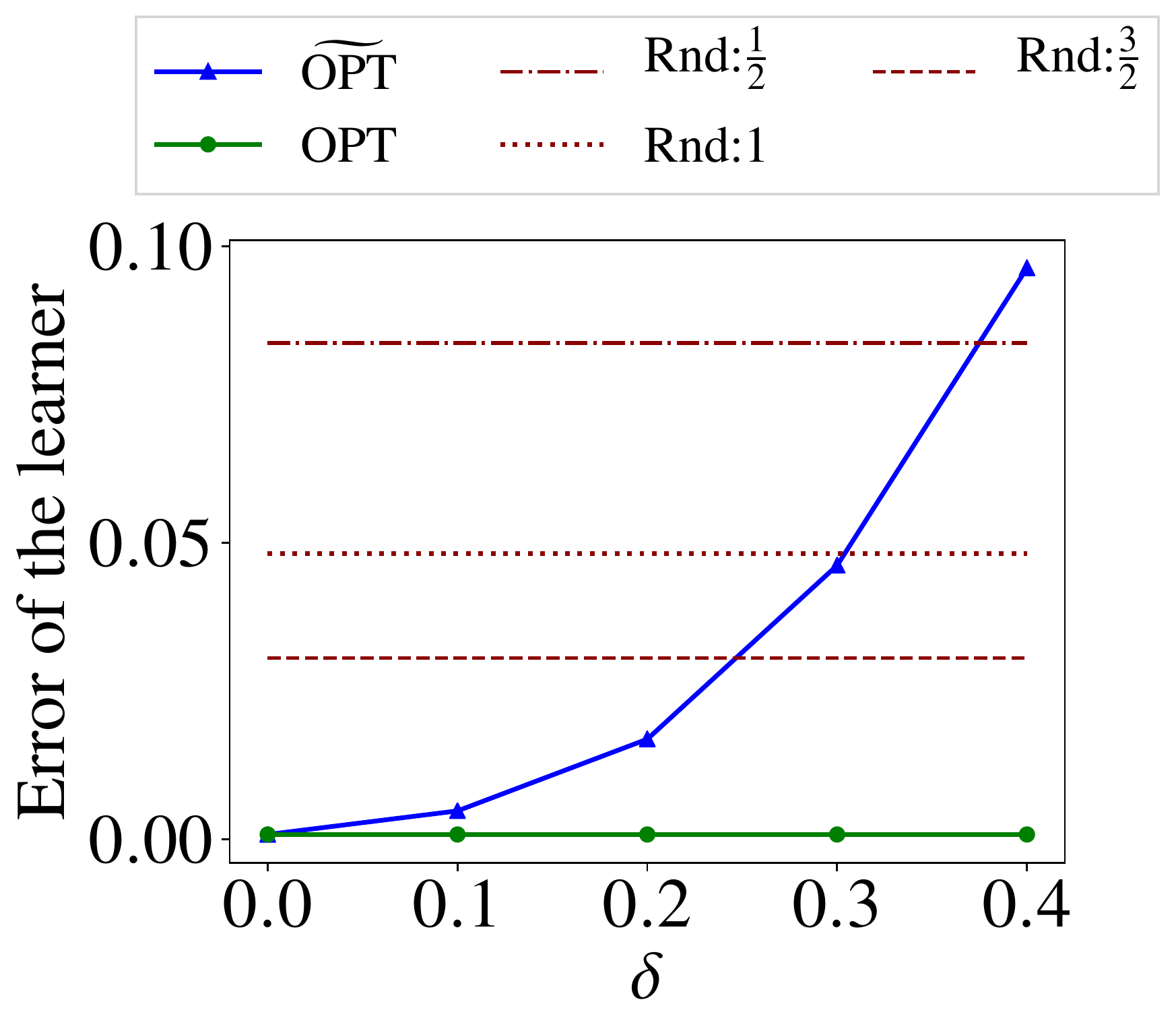}
		\vspace{-5mm}
		\caption{Learner's error: $\widetilde{\eta} = \eta + \delta$}
		\label{fig:results.2}
	\end{subfigure}		
	\begin{subfigure}[b]{0.24\textwidth}
	    \centering
		\includegraphics[width=1\linewidth]{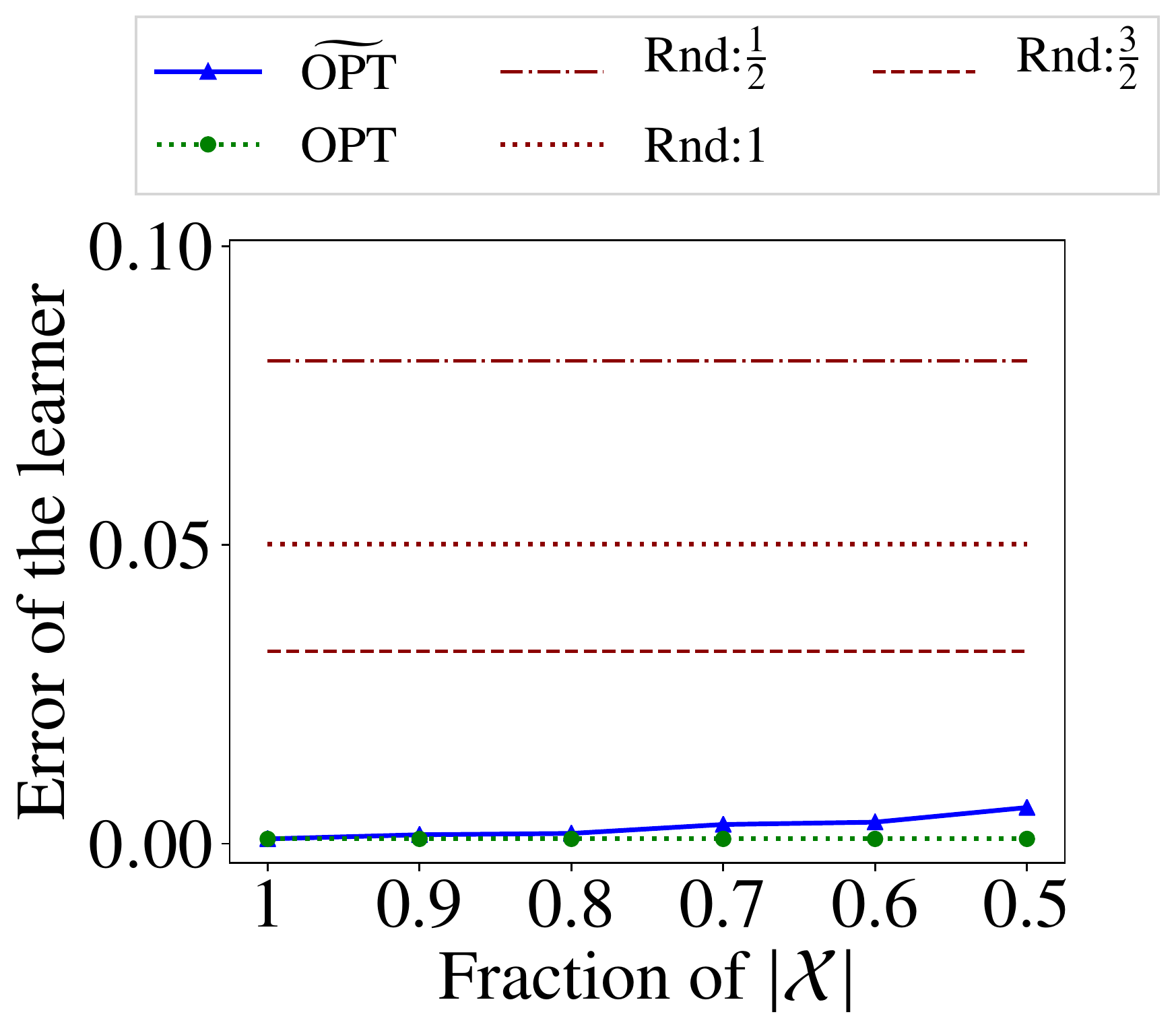}
		\vspace{-5mm}
		\caption{Learner's error: $\widetilde{\Examples}$}
		\label{fig:results.3}
	\end{subfigure}		
	\begin{subfigure}[b]{0.24\textwidth}
	    \centering
		\includegraphics[width=1\linewidth]{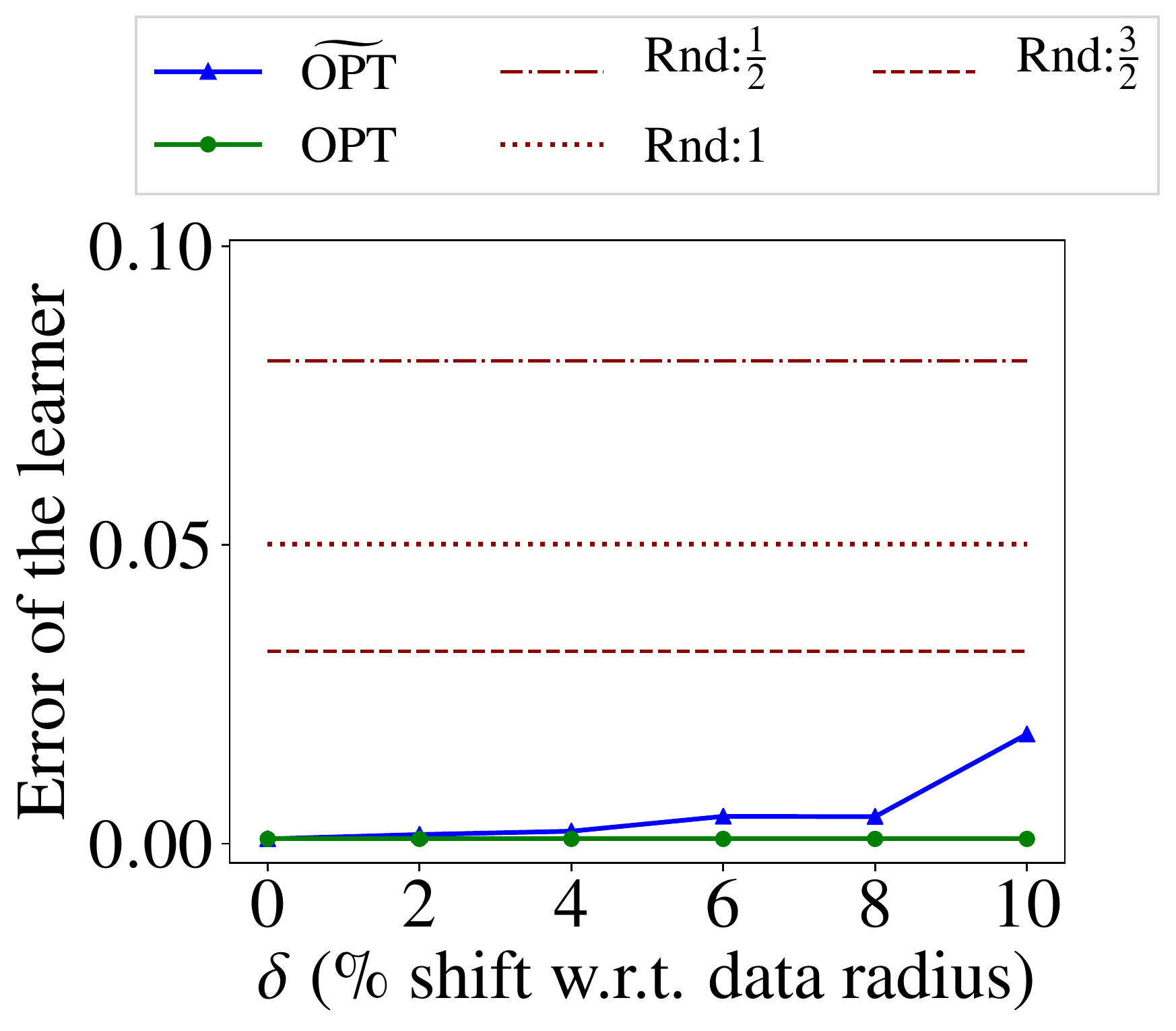}
		\vspace{-5mm}
		\caption{Learner's error: $\widetilde{\phi}$}
		\label{fig:results.4}
	\end{subfigure}	
	\begin{subfigure}[b]{0.24\textwidth}
	   \centering
		\includegraphics[width=1\linewidth]{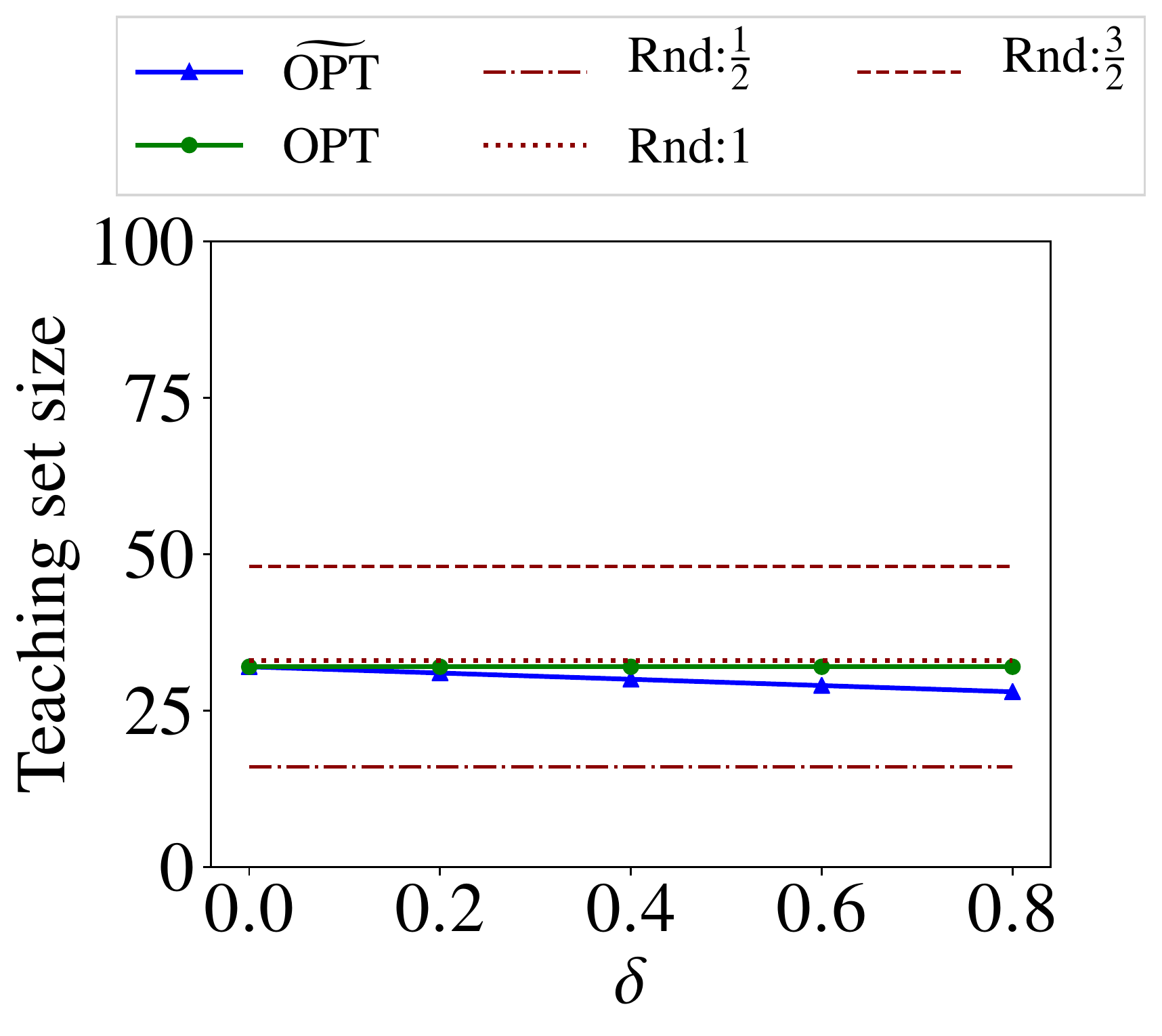}
		\vspace{-5mm}
		\caption{Teaching set size: $\widetilde{Q_0}$}
		\label{fig:results.5}
	\end{subfigure}
	\quad
	\begin{subfigure}[b]{0.24\textwidth}
	    \centering
		\includegraphics[width=1\linewidth]{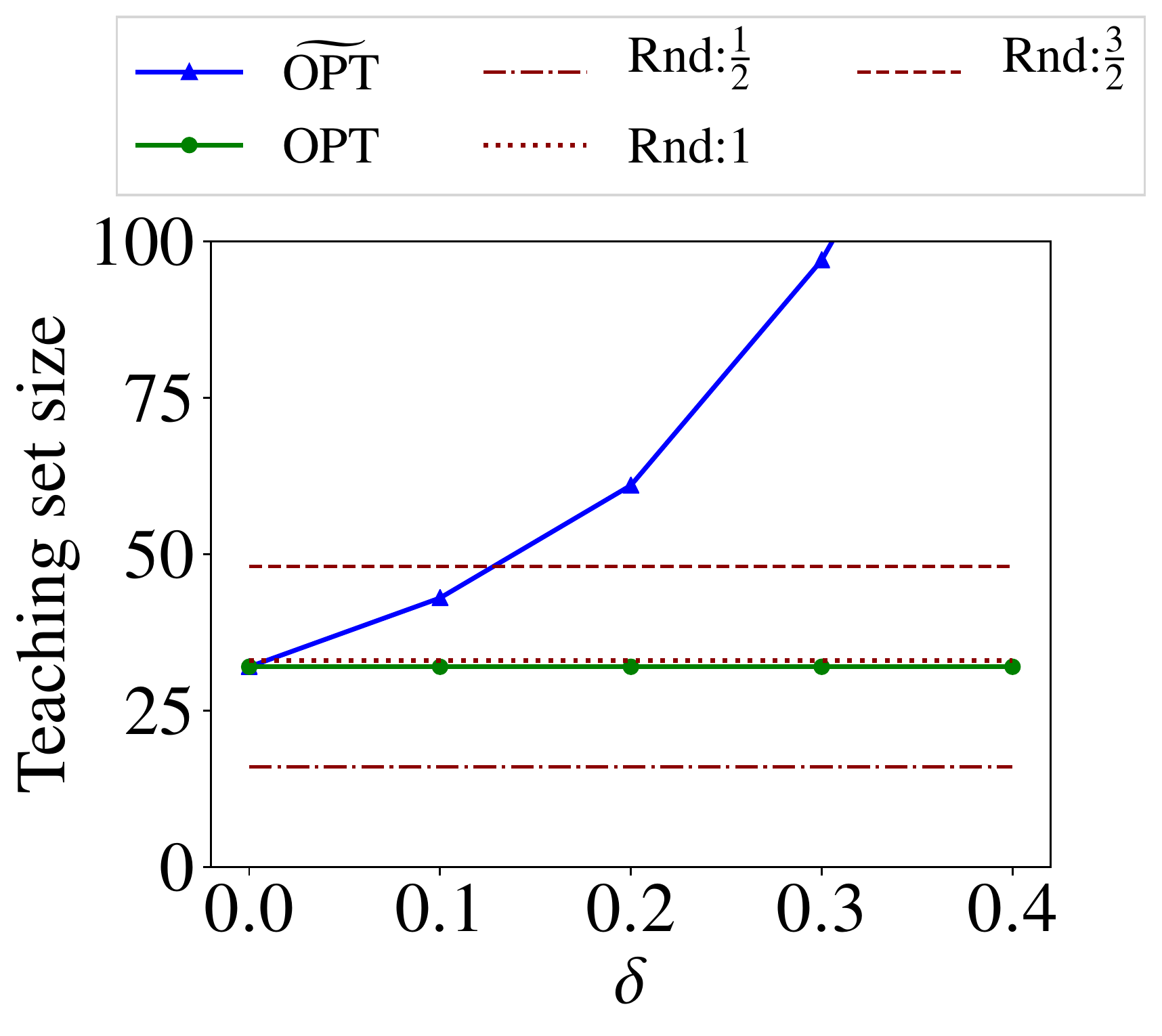}
		\vspace{-5mm}
		\caption{Teaching set size: $\widetilde{\eta} = \eta - \delta$}
		\label{fig:results.6}
	\end{subfigure}		
	\begin{subfigure}[b]{0.24\textwidth}
	    \centering
		\includegraphics[width=1\linewidth]{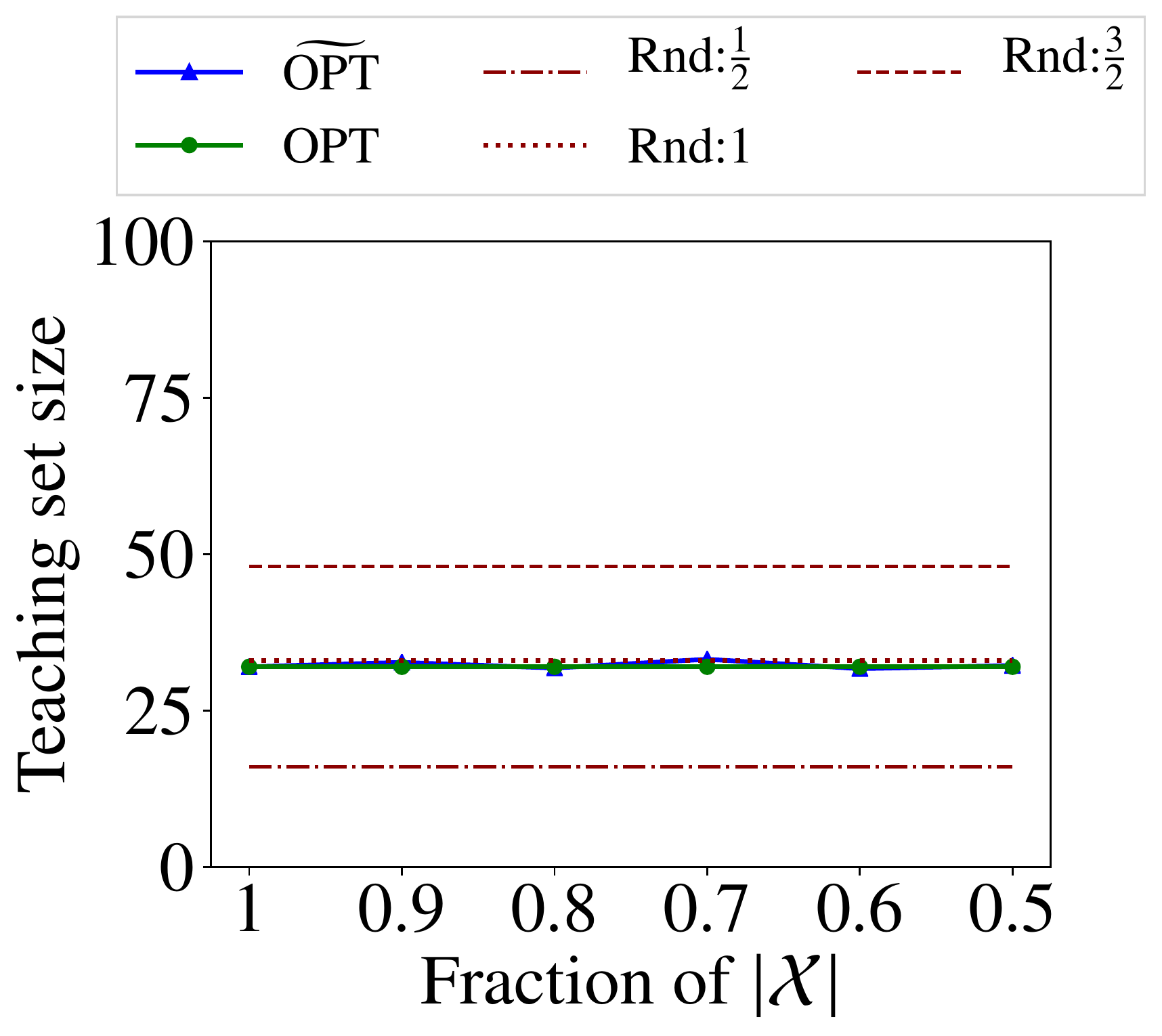}
		\vspace{-5mm}
		\caption{Teaching set size: $\widetilde{\Examples}$}
		\label{fig:results.7}
	\end{subfigure}		
	\begin{subfigure}[b]{0.24\textwidth}
	    \centering
		\includegraphics[width=1\linewidth]{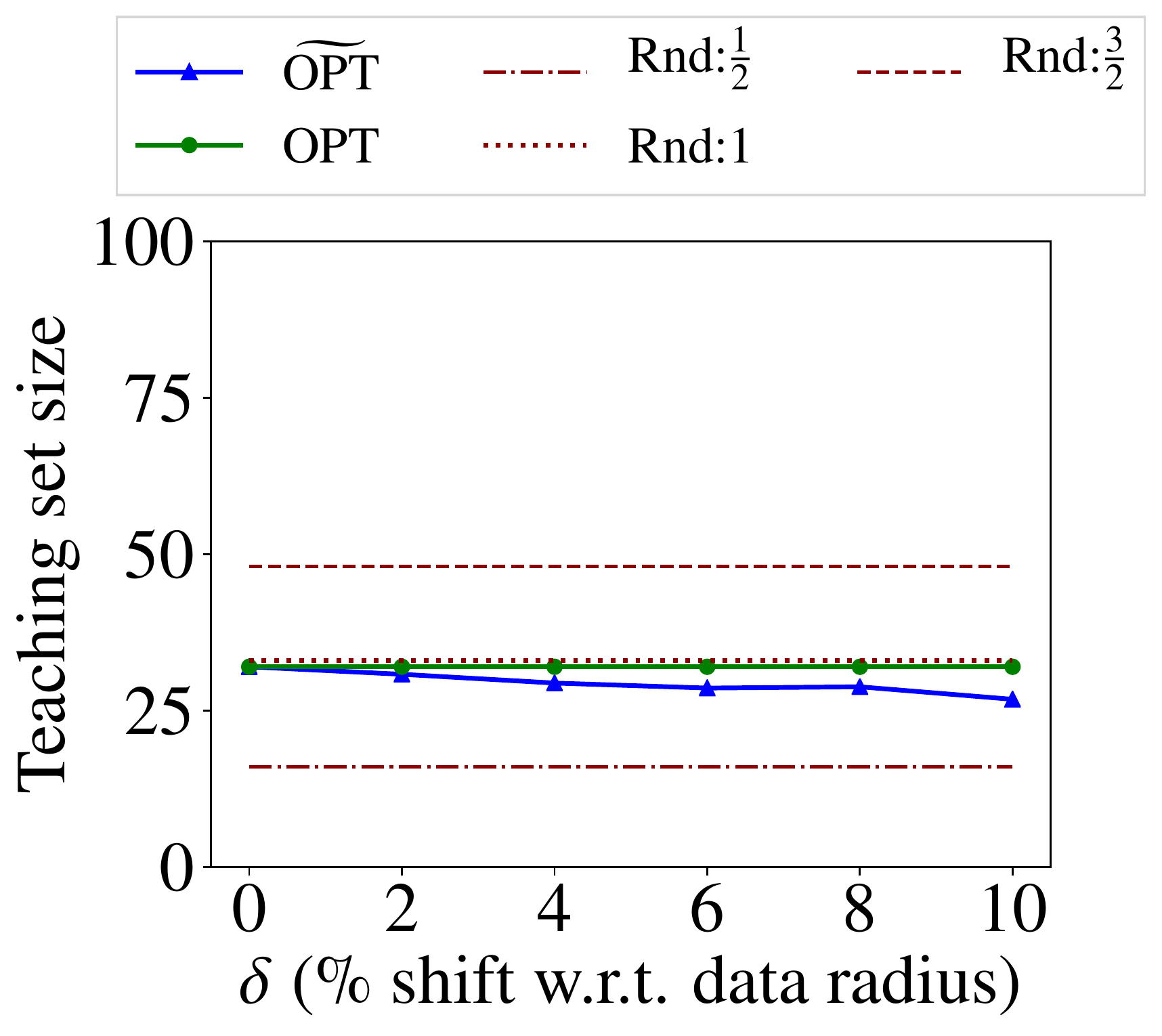}
		\vspace{-5mm}
		\caption{Teaching set size: $\widetilde{\phi}$}
		\label{fig:results.8}
	\end{subfigure}			
	\vspace{-2mm}
   \caption{Experimental results for the problem in Figure~\ref{fig:environments.butterflymoth}. \textbf{(a, e)}: robustness of teaching for $\Delta_{Q_0}$-imperfect teacher. \textbf{(b, f)}: for $\Delta_{\eta}$-imperfect teacher, the learner's error could be high when the teacher overestimates $\eta$ or the teaching set size could be arbitrary large when the teacher underestimates $\eta$. \textbf{(c, g)}: robustness of teaching for $\Delta_{\Examples}$-imperfect teacher. \textbf{(d, h)}: robustness of teaching for $\Delta_{\phi}$-imperfect teacher.
   }
	\label{fig:results}
\end{figure*}
%
%

\section{Experimental Evaluation}\label{sec:experiments}
In this section, we perform empirical studies to validate the guarantees provided by our theorems, and to showcase that the data regularity assumptions we made in the previous section are satisfied in real-world problem settings.

\paragraph{Teaching task.} We consider a binary image classification task for identifying animal species. This specific task has been studied extensively in the machine teaching literature (see \cite{singla2014near,chen18explain,mac2018teaching,DBLP:conf/aaai/YeoKSMAFDC19}). First, we state the problem setup from the viewpoint of a teacher with full knowledge represented as $(Q_0, \eta, \Examples, \hstar, \phi, \Hypotheses)$. Our problem setup is based on the task and dataset that is used in the works of \cite{singla2014near,DBLP:conf/aaai/YeoKSMAFDC19}. The task is to distinguish ``moths" ($-$ labeled class) from ``butterflies" ($+$ labeled class). We have a total of $|\Examples| = 160$ labeled images and the embedding of instances is shown in Figure~\ref{fig:environments.butterflymoth}.  We have $|\Hypotheses|=67$ hypotheses, and a subset of these hypotheses along with $\hstar$ are shown in Figure~\ref{fig:environments.butterflymoth}. We consider $Q_0$ to be uniform distribution over $\Hypotheses$, $\eta = 0.5$, and have desired $\epsilon=0.001$.\

%

\paragraph{Metrics and baselines.} All the results corresponding to four different notions of imperfect teacher are shown in Figure~\ref{fig:results}, averaged over 10 runs. For performance metrics, we plot the eventual error of the learner and the size of the teaching set. In addition to $\opt_\epsilon$ (simply denoted as $\opt$ in plots) and $\apx_{\epsilon, \Delta}$ (simply denoted as $\apx$ in plots), we also have three more baselines denoted as $\textnormal{Rnd:}\frac{1}{2}$, $\textnormal{Rnd:}1$, and $\textnormal{Rnd:}\frac{3}{2}$. These three baselines correspond to teachers who select examples randomly, with set sizes being $\frac{1}{2}$, $1$, and $\frac{3}{2}$ times that of $|\opt|$. 

    \paragraph{Empirical results.}  We consider $\Delta_{Q_0}$-imperfect teacher with $\Delta_{Q_0} = (\delta, \delta)$ (i.e., $\delta_1 = \delta_2 = \delta$) with $\delta \in [0, 0.8]$; results are shown in Figures~\ref{fig:results.1},\ref{fig:results.5}. For $\Delta_{\eta}$-imperfect teacher, we vary $\delta \in [0, 0.4]$ considering a teacher who overestimates or underestimates the learning rate; results are shown in Figures~\ref{fig:results.2},\ref{fig:results.6}.  For $\Delta_{\Examples}$-imperfect teacher, we vary the fraction of instances $\Instances$ from $1$ to $0.5$ that we sample to construct $\widetilde{\Examples}$ and sampling is done \emph{i.i.d.}; the performance of this teacher is shown in Figures~\ref{fig:results.3},\ref{fig:results.7}. For $\Delta_{\phi}$-imperfect teacher, we computed noisy representation $\widetilde{\phi}$ by adding a random vector in $\R^2$ of norm $\delta$ as noise to $\phi(\instance) \ \forall \instance \in \Instances$; results are shown in Figures~\ref{fig:results.4},\ref{fig:results.8}. Note that in Figures~\ref{fig:results.4},\ref{fig:results.8}, the norm $\delta$ is shown as relative $\%$ shift w.r.t. data radius, where the radius is $\max_{\instance \in \Instances} ||\phi(x)||_2$ (see Figure~\ref{fig:environments.butterflymoth}).

\looseness-1The results in these plots validate the performance guarantees that we proved in previous sections. It is important to note that for $\Delta_{\Examples}$-imperfect and $\Delta_{\phi}$-imperfect teacher, any additional structural assumptions as were needed by  Definitions~\ref{def:noise-in-representation:sampleX},\ref{def:noise-in-representation:feature} and Theorems~\ref{thm:noise-in-representation:sampleX2},\ref{thm:noise-in-representation:feature} are naturally satisfied in real-world problem settings as is evident in the performance plots.

%
%
%


%
%
%
%


\section{Conclusions}
We studied the problem of machine teaching when teacher's knowledge is imperfect. We focused on understanding the robustness of a teacher who constructs teaching sets based on its imperfect knowledge. When having imperfect knowledge about the learner model, our results suggest that having a good estimate of the learning rate is a lot more important than learner's initial knowledge. In terms of imperfect knowledge about the task specification, we introduced some regularity assumptions under which the teacher is robust. Our empirical experiments on a real-world teaching problem further validate our theoretical results.  Our findings have important implications in designing teaching algorithms for real-world applications in education.


\clearpage
\bibliography{main}
\bibliographystyle{named}

\iftoggle{longversion}{
\onecolumn
\appendix
{\allowdisplaybreaks
\section{Proof of Theorem~\ref{thm:noise-in-learning-parameters:prior}}
We first introduce a lemma which will be useful in proving the theorem. 

\begin{lemma}\label{lem:Q-vs-QT-parameters:prior}
    For any set of examples $S$ we have $Q(\hypothesis | S) \cdot (1 - \delta_1) \leq \widetilde{Q}(\hypothesis | S) \leq Q(\hypothesis | S)\cdot(1 + \delta_2)$.
\end{lemma}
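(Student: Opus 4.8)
The plan is to exploit the fact that in the $\Delta_{Q_0}$ setting \emph{only} the prior is corrupted: the teacher's knowledge is $(\widetilde{Q_0}, \eta, \Examples, \hstar, \phi, \Hypotheses)$, so the learning rate $\eta$, the feature map $\phi$, and the example set are all exact. Consequently the likelihood products appearing in $Q(\hypothesis \mid S)$ and $\widetilde{Q}(\hypothesis \mid S)$ are literally identical, and the two scores differ only through their leading prior factor.

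Concretely, I would write $P(\hypothesis, S) := \prod_{s=1,\ldots,|S|} J(\clabel_s \mid \hypothesis, \instance_s, \eta)$ for the common product term, so that by Eq.~\eqref{eq:learner-model} we have $Q(\hypothesis \mid S) = Q_0(\hypothesis)\cdot P(\hypothesis, S)$ and $\widetilde{Q}(\hypothesis \mid S) = \widetilde{Q_0}(\hypothesis)\cdot P(\hypothesis, S)$, with the same $P$ in both expressions. The key structural fact I would record is that $P(\hypothesis, S) \geq 0$, which is immediate from the definition of $J$ in Eq.~\eqref{eq:J-expo} since each factor lies in $[1-\eta, 1] \subseteq [0,1]$.

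I would then simply invoke Definition~\ref{def:noise-in-learning-parameters:prior}, which guarantees $(1-\delta_1)\, Q_0(\hypothesis) \leq \widetilde{Q_0}(\hypothesis) \leq (1+\delta_2)\, Q_0(\hypothesis)$ for every $\hypothesis \in \Hypotheses$, and multiply this entire chain by the nonnegative scalar $P(\hypothesis, S)$. Because multiplication by a nonnegative number preserves the direction of the inequalities, this yields $(1-\delta_1)\, Q(\hypothesis \mid S) \leq \widetilde{Q}(\hypothesis \mid S) \leq (1+\delta_2)\, Q(\hypothesis \mid S)$, which is exactly the claimed statement.

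There is no substantive obstacle here; the single point requiring care is the observation that the likelihood factors cancel precisely because $\eta$, $\phi$, and the examples are shared between the true and estimated models. This is what lets a multiplicative perturbation bound on the priors transfer, unchanged and uniformly over all $S$, to the posterior scores. The resulting sandwich bound is the workhorse for the proof of Theorem~\ref{thm:noise-in-learning-parameters:prior}, where one propagates the $(1-\delta_1)$ and $(1+\delta_2)$ factors through the definitions of $\Error$, the surrogate $F$, and the stopping threshold to obtain the stated $M.1$ and $M.2$ guarantees.
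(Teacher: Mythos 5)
Your proof is correct and matches the paper's own argument: both exploit that only the prior factor differs between $Q(\hypothesis \mid S)$ and $\widetilde{Q}(\hypothesis \mid S)$ (since $\eta$, $\phi$, and the examples are exact), and multiply the prior sandwich bound from Definition~\ref{def:noise-in-learning-parameters:prior} by the common nonnegative likelihood product. The paper simply writes the two inequalities inline without naming the product term, so your version is, if anything, slightly more explicit about why the cancellation is legitimate.
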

\begin{proof}
The proof of the left side is as following:
\begin{align*}
    \widetilde{Q}(\hypothesis | S) = \widetilde{Q}_0(\hypothesis) \cdot \prod_{s \in S} J(\clabel_s |  \hypothesis, \instance_s, \eta) \geq Q_0(\hypothesis) \cdot \prod_{s \in S} J(\clabel_s, | \hypothesis, \instance_s, \eta) \cdot (1 - \delta_1) = Q(\hypothesis | S) \cdot (1 - \delta_1).
\end{align*}

The proof of the right side is as following:
\begin{align*}
    \widetilde{Q}(\hypothesis | S) = \widetilde{Q}_0(\hypothesis) \cdot \prod_{s \in S} J(\clabel_s |  \hypothesis, \instance_s, \eta) \leq Q_0(\hypothesis)\prod_{\instance \in \examples} J(\clabel_s |  \hypothesis, \instance_s, \eta) \cdot (1 + \delta_2) = Q(\hypothesis | S) \cdot (1 + \delta_2) .
\end{align*}

\end{proof}

\begin{proof} We will prove two parts of the theorem separately below.

\paragraph{Proof of the first part of the theorem\\}
    By substituting $S$ with $\teachOpt{\epsilon, \Delta_{Q_0}}$ in Lemma~\ref{lem:Q-vs-QT-parameters:prior}, we have
    \begin{align*}
       \Error(\teachOpt{\epsilon, \Delta_{Q_0}}) &=  \sum_{\hypothesis \in \Hypotheses}\frac{Q(\hypothesis|\teachOpt{\epsilon, \Delta_{Q_0}})}{\sum_{\hypothesis' \in \Hypotheses}Q(\hypothesis'|\teachOpt{\epsilon, \Delta_{Q_0}})} \error(\hypothesis)  \\
       &\leq \sum_{\hypothesis \in \Hypotheses}\frac{\frac{\widetilde{Q}(\hypothesis|\teachOpt{\epsilon, \Delta_{Q_0}})}{1 - \delta_1}}{\sum_{\hypothesis' \in \Hypotheses}\frac{\widetilde{Q}(\hypothesis'|\teachOpt{\epsilon, \Delta_{Q_0}})}{1 + \delta_2}} \error(\hypothesis)  \\
       &=  \sum_{\hypothesis \in \Hypotheses}\frac{\widetilde{Q}(\hypothesis|\teachOpt{\epsilon, \Delta_{Q_0}})}{\sum_{\hypothesis' \in \Hypotheses}\widetilde{Q}(\hypothesis'|\teachOpt{\epsilon, \Delta_{Q_0}})} \error(\hypothesis) \cdot \Big(\frac{1 + \delta_2}{1 - \delta_1}\Big) \\
       &= \widetilde{\Error}(\teachOpt{\epsilon, \Delta_{Q_0}}) \cdot \Big(\frac{1 + \delta_2}{1 - \delta_1}\Big) \\
       &\leq  \frac{\epsilon \cdot (1 + \delta_2)}{(1 - \delta_1)}
    \end{align*}
where the last step uses the fact that the set $\teachOpt{\epsilon, \Delta_{Q_0}}$ ensures that the learner's error from teacher's view is not more than $\epsilon$ (i.e.,  $\widetilde{\Error}(\teachOpt{\epsilon, \Delta_{Q_0}}) \leq \epsilon$).

\paragraph{Proof of the second part of the theorem\\}
By definition, $\teachOpt{\epsilon, \Delta_{Q_0}}$ is the smallest set $S$ that ensures
\begin{align}\label{eq:optT-def-parameters:prior}
    \sum_{\hypothesis \in \Hypotheses} \big(\widetilde{Q}_0(\hypothesis) - \widetilde{Q}(\hypothesis|S)\big) \cdot \error(\hypothesis) \geq \sum_{\hypothesis \in \Hypotheses} \widetilde{Q}_0(\hypothesis) \cdot \error(\hypothesis) - \epsilon \cdot \widetilde{Q}_0(\hstar).
\end{align}

In the following, we will use $\hat{\epsilon} = \frac{\epsilon \cdot (1 - \delta_1)}{1 + \delta_2}$. We will show that the set $\opt_{\hat{\epsilon}}$ also satisfies condition in Eq.~\ref{eq:optT-def-parameters:prior}, which in turn would imply that $|\teachOpt{\epsilon, \Delta_{Q_0}}| \leq |\opt_{\hat{\epsilon}}|$. We will make use of the following condition which follows from the definition of  the set $\opt_{\hat{\epsilon}}$:
\begin{align}
    \sum_{\hypothesis \in \Hypotheses} \big(Q_0(\hypothesis) - Q(\hypothesis| \opt_{\hat{\epsilon}} )\big) \cdot \error(\hypothesis) &\geq \sum_{\hypothesis \in \Hypotheses} Q_0(\hypothesis) \cdot  \error(\hypothesis) - \frac{\epsilon \cdot (1 - \delta_1)}{(1 + \delta_2)} \cdot Q_0(\hstar) \notag \\
\Rightarrow\quad \sum_{\hypothesis \in \Hypotheses} Q(\hypothesis| \opt_{\hat{\epsilon}} ) \cdot \error(\hypothesis) &\leq \frac{\epsilon \cdot (1 - \delta_1)}{(1 + \delta_2)} \cdot Q_0(\hstar) \label{eq:opt-reduce-parameters:prior}.
\end{align}


Substituting $S$ with $\opt_{\hat{\epsilon}}$, we start from the left hand side of Eq.~\ref{eq:optT-def-parameters:prior}, and then we will follow a series of steps to arrive at the right hand side of Eq.~\ref{eq:optT-def-parameters:prior}.

\begin{align*}
 \sum_{\hypothesis \in \Hypotheses} \big(\widetilde{Q}_0(\hypothesis) - \widetilde{Q}(\hypothesis|\opt_{\hat{\epsilon}})\big) \cdot \error(\hypothesis) & \stackrel{(a)}{\geq}  \sum_{\hypothesis \in \Hypotheses} \widetilde{Q}_0(\hypothesis) \cdot \error(\hypothesis) - (1 + \delta_2) \cdot  \sum_{\hypothesis \in \Hypotheses}  Q(\hypothesis|\opt_{\hat{\epsilon}}) \cdot \error(\hypothesis) \\ 
& \stackrel{(b)}{\geq} \sum_{\hypothesis \in \Hypotheses} \widetilde{Q}_0(\hypothesis) \cdot \error(\hypothesis) - (1 + \delta_2) \cdot  \frac{\epsilon \cdot (1 - \delta_1)}{(1 + \delta_2)} \cdot Q_0(\hstar) \\ 
& = \sum_{\hypothesis \in \Hypotheses} \widetilde{Q}_0(\hypothesis) \cdot \error(\hypothesis) -  \epsilon \cdot (1 - \delta_1) \cdot Q_0(\hstar) \\ 
&\stackrel{(c)}{\geq} \sum_{\hypothesis \in \Hypotheses} \widetilde{Q}_0(\hypothesis) \cdot \error(\hypothesis) -  \epsilon \cdot \widetilde{Q}_0(\hstar).
\end{align*}
In the above, steps (a) and (c) follow from Lemma~\ref{lem:Q-vs-QT-parameters:prior}, and step (b) follows by utilizing condition in Eq.~\ref{eq:opt-reduce-parameters:prior}.
\end{proof}

\section{Proof of Theorem~\ref{thm:noise-in-learning-parameters:learningrate}}

\begin{proof}
Consider a problem setting with $\Hypotheses = \{\bar{h}, \hstar \}$ such that $\error(\bar{h}) = 1$ and $\error(\hstar) = 0$. The values of parameters $\epsilon$ and $\Delta_\eta = (\delta)$ are fixed. We study two separate cases to prove two parts of the theorem.

\paragraph{Proof of the first part of the theorem: $\widetilde{\eta} = \min\{\eta+\delta, 1\}$\\}
Pick $\eta$ such that $\eta, \widetilde{\eta} \in (0, 1)$.  Also, fix a number $k$ such that $$k = \bigg\lceil \frac{\log(\frac{1}{\epsilon})}{\log(\frac{1 - \eta}{1 - \widetilde{\eta}})} \bigg\rceil.$$

Furthermore, we set $Q_0$ so that the following two conditions hold:
\begin{align*}
\frac{Q_0(\bar{h})}{Q_0(\hstar)} = \frac{\epsilon}{(1 - \widetilde{\eta})^k}, \qquad Q_0(\hstar) + Q_0(\bar{h}) = 1\\
\end{align*}

From these parameter settings, it is easy to show that $|\teachOpt{\epsilon, \Delta_\eta}| = k$. Next, we write the learner's error $\Error(\teachOpt{\epsilon, \Delta_{\eta}})$ as follows:
    \begin{align*}
       \Error(\teachOpt{\epsilon, \Delta_{\eta}}) = &  \sum_{\hypothesis \in \Hypotheses}\frac{Q(\hypothesis|\teachOpt{\epsilon, \Delta_{\eta}})}{\sum_{\hypothesis' \in \Hypotheses}Q(\hypothesis'|\teachOpt{\epsilon, \Delta_{\eta}})} \error(\hypothesis) \\
       & =  \frac{Q(\bar{h}|\teachOpt{\epsilon, \Delta_{\eta}})}{Q(\bar{h}|\teachOpt{\epsilon, \Delta_{\eta}}) + Q(\hstar|\teachOpt{\epsilon, \Delta_{\eta}})} \\
       &= \frac{Q_0(\bar{h})\cdot(1-\eta)^k}{Q_0(\bar{h})\cdot(1-\eta)^k + Q_0(\hstar)} \\
       &= \frac{1}{1 + \frac{(1-\widetilde{\eta})^k}{\epsilon \cdot (1-\eta)^k}}\\
       &\approx \frac{1}{2}.
    \end{align*}


\paragraph{Proof of the second part of the theorem: $\widetilde{\eta} = \max\{\eta-\delta, 0\}$\\}
Pick $\eta$ such that $\eta, \widetilde{\eta} \in (0, 1)$.  Also, pick any $\hat{\epsilon}$ (arbitrarily close to $0$) and then fix a number $k$ such that $$k = \bigg\lceil \frac{\log(\frac{\epsilon}{\hat{\epsilon}})}{\log(\frac{1 - \widetilde{\eta}}{1 - \eta})} \bigg\rceil.$$

Furthermore, we set $Q_0$ so that the following two conditions hold:
\begin{align*}
\frac{Q_0(\bar{h})}{Q_0(\hstar)} = \frac{\epsilon}{(1 - \widetilde{\eta})^k}, \qquad Q_0(\hstar) + Q_0(\bar{h}) = 1\\
\end{align*}

From these parameter settings, it is easy to show that $|\teachOpt{\epsilon, \Delta_\eta}| = k$ and $|\opt_{\hat{\epsilon}}| = k$. This in turn proves the desired statement that $|\teachOpt{\epsilon, \Delta_\eta}| \geq |\opt_{\hat{\epsilon}}|$ for $\hat{\epsilon}$ arbitrarily close to $0$.
\end{proof}

\section{Proof of Theorem~\ref{thm:noise-in-representation:sampleX2}}

\begin{proof} We will prove the two parts of the theorem separately below.

\paragraph{Proof of the first part of the theorem\\}
By definition, $\teachOpt{\epsilon, \Delta_{\Examples}}$ satisfies the following: 
\begin{align*}
    \sum_{\hypothesis \in \Hypotheses} \big(Q_0(\hypothesis) - Q(\hypothesis|\teachOpt{\epsilon, \Delta_{\Examples}})\big) \cdot \widetilde{\error}(\hypothesis) & \geq \sum_{\hypothesis \in \Hypotheses} Q_0(\hypothesis) \cdot \widetilde{\error}(\hypothesis) - \epsilon \cdot Q_0(\widetilde{\hstar})\\
\Rightarrow\quad \sum_{\hypothesis \in \Hypotheses} Q(\hypothesis|\teachOpt{\epsilon, \Delta_{\Examples}}) \cdot \widetilde{\error}(\hypothesis) & \leq \epsilon \cdot Q_0(\widetilde{\hstar}) \\
\end{align*}

This in turn implies that, with probability at least $(1 - \delta_1)$, the set $\teachOpt{\epsilon, \Delta_{\Examples}}$ satisfies the following: 
\begin{align}
\sum_{\hypothesis \in \Hypotheses} Q(\hypothesis|\teachOpt{\epsilon, \Delta_{\Examples}}) \cdot \big(\error(\hypothesis) - \delta_2\big) & \leq \epsilon \cdot Q_0(\widetilde{\hstar}) \notag \\
\Rightarrow\quad \sum_{\hypothesis \in \Hypotheses} Q(\hypothesis|\teachOpt{\epsilon, \Delta_{\Examples}}) \cdot \error(\hypothesis) &\leq  \epsilon \cdot Q_0(\widetilde{\hstar}) + \big(\sum_{\hypothesis \in \Hypotheses} Q(\hypothesis|\teachOpt{\epsilon, \Delta_{\Examples}})\big)\cdot\delta_2 \notag \\
\Rightarrow\quad \sum_{\hypothesis \in \Hypotheses} Q(\hypothesis|\teachOpt{\epsilon, \Delta_{\Examples}})\cdot \error(\hypothesis) &\leq  \epsilon \cdot Q_\textnormal{max} + \delta_2 \label{eq.appendix3.label1}
\end{align}

Next, we compute the error of the learner for the set $\teachOpt{\epsilon, \Delta_{\Examples}}$  as follows:
    \begin{align*}
       \Error(\teachOpt{\epsilon, \Delta_{\Examples}}) &=  \sum_{\hypothesis \in \Hypotheses}\frac{Q(\hypothesis|\teachOpt{\epsilon, \Delta_{\Examples}})}{\sum_{\hypothesis' \in \Hypotheses}Q(\hypothesis'|\teachOpt{\epsilon, \Delta_{\Examples}})} \error(\hypothesis)  \\
       &\leq \sum_{\hypothesis \in \Hypotheses}\frac{Q(\hypothesis|\teachOpt{\epsilon, \Delta_{\Examples}})}{Q_0(\hstar)} \error(\hypothesis)\\
       &\stackrel{(a)}{\leq} \frac{\epsilon \cdot Q_\textnormal{max} + \delta_2}{Q_0(\hstar)}        \\    
    \end{align*}
where the last step marked as (a) uses the condition from Eq.~\ref{eq.appendix3.label1}.

\paragraph{Proof of the second part of the theorem\\}
In this part, we will make use of the  statement below which holds when the problem setting is $\lambda$-smooth as per the theorem assumptions.
For any set of examples $S \subseteq \Examples$ and for a set $S' \subseteq \widetilde{\Examples}$ which is a $\delta_3$-perturbed version of $S$, the following holds for all $\hypothesis \in \Hypotheses$ :
\begin{align}
Q(\hypothesis|S') \leq Q(\hypothesis|S) \cdot (1 - \eta)^{- \lambda \cdot \delta_3} \label{lem:Q-S-S'-representation:sampleX}
\end{align}

We begin the proof by noting the following: By definition, $\teachOpt{\epsilon, \Delta_{\Examples}}$ is the smallest set $S \subseteq \widetilde{\Examples}$ that ensures
\begin{align}\label{eq.appendix3.label2}
    \sum_{\hypothesis \in \Hypotheses} \big(Q_0(\hypothesis) - Q(\hypothesis|S)\big) \cdot \widetilde{\error}(\hypothesis) \geq \sum_{\hypothesis \in \Hypotheses} Q_0(\hypothesis) \cdot \widetilde{\error}(\hypothesis) - \epsilon \cdot Q_0(\widetilde{\hstar}).
\end{align}

The proof follows along the same arguments as the proof of Theorem~\ref{thm:noise-in-learning-parameters:prior}, however, the key challenge here is that the set $S$ in Eq.~\ref{eq.appendix3.label2} is selected from $\widetilde{\Examples}$ instead of $\Examples$. In the following, we will use $\hat{\epsilon} = \frac{(\epsilon \cdot Q_{\textnormal{min}} - \delta_2) \cdot (1 - \eta)^{\lambda \cdot \delta_3}}{Q_0(h^*)}$.  We will make use of the following condition which follows from the definition of  the set $\opt_{\hat{\epsilon}}$:
\begin{align}
    \sum_{\hypothesis \in \Hypotheses} \big(Q_0(\hypothesis) - Q(\hypothesis| \opt_{\hat{\epsilon}} )\big) \cdot \error(\hypothesis) &\geq \sum_{\hypothesis \in \Hypotheses} Q_0(\hypothesis) \cdot  \error(\hypothesis) - \hat{\epsilon} \cdot Q_0(\hstar) \notag \\
\Rightarrow\quad \sum_{\hypothesis \in \Hypotheses} Q(\hypothesis| \opt_{\hat{\epsilon}} ) \cdot \error(\hypothesis) &\leq \hat{\epsilon} \cdot Q_0(\hstar) \label{eq.appendix3.label3}.
\end{align}

Based on the second condition of $\Delta_{\Examples}$-imperfect teacher in Definition~\ref{def:noise-in-representation:sampleX}, we know that with probability at least $(1 - \delta_1)$ there exists a set $A \subseteq \widetilde{\Examples}$ which is $\delta_3$-perturbed version of $\opt_{\hat{\epsilon}}$. Note that, here we are making use of the theorem's assumption that $\widetilde{\Examples}$ is sufficiently large, i.e. $|\widetilde{\Examples}| \geq |\opt_{\hat{\epsilon}}|$.  Then, we take the condition in Eq.~\ref{lem:Q-S-S'-representation:sampleX} (by substituting $S$ with $\opt_{\hat{\epsilon}}$ and $S'$ with $A$) and condition in Eq.~\ref{eq.appendix3.label3} to derive the following which holds with probability at least $(1 - \delta_1)$:
\begin{align}
\sum_{\hypothesis \in \Hypotheses} Q(\hypothesis| A ) \cdot \error(\hypothesis) &\leq (1 - \eta)^{- \lambda \cdot \delta_3} \cdot \hat{\epsilon} \cdot Q_0(\hstar) \\
\Rightarrow\quad \sum_{\hypothesis \in \Hypotheses} Q(\hypothesis| A ) \cdot \error(\hypothesis) &\leq \epsilon \cdot Q_\textnormal{min} - \delta_2  \label{eq.appendix3.label4}.
\end{align}

Next, we will show that the set $A$ also satisfies condition Eq.~\ref{eq.appendix3.label2}, which in turn would imply that $|\teachOpt{\epsilon, \Delta_{\Examples}}| \leq |A|  = |\opt_{\hat{\epsilon}}|$.   Substituting $S$ with $A$, we start from the left hand side of Eq.~\ref{eq.appendix3.label2}, and then we will follow a series of steps to arrive at the right hand side of the Eq.~\ref{eq.appendix3.label2}. Note that the results we are proving below holds with probability at least $(1 - \delta_1)$.

\begin{align*}
 \sum_{\hypothesis \in \Hypotheses} \big(Q_0(\hypothesis) - Q(\hypothesis|A)\big) \cdot \widetilde{\error}(\hypothesis) & \stackrel{(a)}{\geq}  \sum_{\hypothesis \in \Hypotheses} Q_0(\hypothesis) \cdot \widetilde{\error}(\hypothesis) -  \sum_{\hypothesis \in \Hypotheses}  Q(\hypothesis|A) \cdot \big(\error(\hypothesis) + \delta_2\big)\\ 
& \stackrel{(b)}{\geq}  \sum_{\hypothesis \in \Hypotheses} Q_0(\hypothesis) \cdot \widetilde{\error}(\hypothesis) -   \delta_2 - \sum_{\hypothesis \in \Hypotheses}  Q(\hypothesis|A) \cdot \error(\hypothesis)\\ 
& \stackrel{(c)}{\geq}  \sum_{\hypothesis \in \Hypotheses} Q_0(\hypothesis) \cdot \widetilde{\error}(\hypothesis) -   \delta_2 - \epsilon \cdot Q_\textnormal{min} + \delta_2\\ 
& \geq  \sum_{\hypothesis \in \Hypotheses} Q_0(\hypothesis) \cdot \widetilde{\error}(\hypothesis) - \epsilon \cdot Q_0(\widetilde{\hstar}). \\ 
\end{align*}
In the above, step (a) uses the first condition from Definition~\ref{def:noise-in-representation:sampleX}, step (b) uses the fact that  $\sum_{\hypothesis \in \Hypotheses}  Q(\hypothesis|A) \leq 1$, and step (c) follows by utilizing condition in Eq.~\ref{eq.appendix3.label4}.
\end{proof}

\section{Proof of Theorem~\ref{thm:noise-in-representation:feature}}

\begin{proof}  Based on the first condition of $\Delta_\phi$-imperfect teacher in Definition~\ref{def:noise-in-representation:feature}, we know that for any set of examples $S$, instances with $\widetilde{\phi}(.)$ feature map are $\delta_1$-perturbed version of instances with $\phi(.)$ feature map. Therefore, since the setting is $\lambda$-smooth, number of instances $\instance \in S$ for which $\hypothesis(\instance; \phi) \neq \hypothesis(\instance; \widetilde{\phi})$ is less than $\lambda \cdot \delta_1$.  Using this observation, we have the following  statement below, similar in spirit of statement in Eq.~\ref{lem:Q-S-S'-representation:sampleX}.  For any set of examples $S \subseteq \Examples$, the following holds for all $\hypothesis \in \Hypotheses$ when the problem setting is $\lambda$-smooth as per the theorem conditions:
\begin{align}
\widetilde{Q}(\hypothesis|S) \leq Q(\hypothesis|S) \cdot (1 - \eta)^{-\lambda \cdot \delta_1} \label{eq.appendix4.label1} \\
Q(\hypothesis|S) \leq \widetilde{Q}(\hypothesis|S) \cdot (1 - \eta)^{-\lambda \cdot \delta_1}  \notag
\end{align}

Next, we will prove two parts of the theorem separately below. 

\paragraph{Proof of the first part of the theorem\\}
By definition, $\teachOpt{\epsilon, \Delta_{\phi}}$ satisfies the following: 
\begin{align*}
    \sum_{\hypothesis \in \Hypotheses} \big(Q_0(\hypothesis) - \widetilde{Q}(\hypothesis|\teachOpt{\epsilon, \Delta_{\phi}})\big) \cdot \widetilde{\error}(\hypothesis) & \geq \sum_{\hypothesis \in \Hypotheses} Q_0(\hypothesis) \cdot \widetilde{\error}(\hypothesis) - \epsilon \cdot Q_0(\widetilde{\hstar})\\
\Rightarrow\quad \sum_{\hypothesis \in \Hypotheses} \widetilde{Q}(\hypothesis|\teachOpt{\epsilon, \Delta_{\phi}}) \cdot \widetilde{\error}(\hypothesis) & \leq \epsilon \cdot Q_0(\widetilde{\hstar}) \\
\end{align*}

This in turn implies that the set $\teachOpt{\epsilon, \Delta_{\phi}}$ satisfies the following: 
\begin{align}
\sum_{\hypothesis \in \Hypotheses} \widetilde{Q}(\hypothesis|\teachOpt{\epsilon, \Delta_{\phi}}) \cdot \big(\error(\hypothesis) - \delta_2\big) & \leq \epsilon \cdot Q_0(\widetilde{\hstar}) \notag \\
\Rightarrow\quad \sum_{\hypothesis \in \Hypotheses} \widetilde{Q}(\hypothesis|\teachOpt{\epsilon, \Delta_{\phi}}) \cdot \error(\hypothesis) &\leq  \epsilon \cdot Q_0(\widetilde{\hstar}) + \big(\sum_{\hypothesis \in \Hypotheses} \widetilde{Q}(\hypothesis|\teachOpt{\epsilon, \Delta_{\phi}})\big)\cdot\delta_2 \notag \\
\stackrel{(a)}{\Rightarrow}\quad \sum_{\hypothesis \in \Hypotheses} \widetilde{Q}(\hypothesis|\teachOpt{\epsilon, \Delta_{\phi}})\cdot \error(\hypothesis) &\leq  \epsilon \cdot Q_\textnormal{max} + \delta_2 \notag \\
\stackrel{(b)}{\Rightarrow}\quad \sum_{\hypothesis \in \Hypotheses} Q(\hypothesis|\teachOpt{\epsilon, \Delta_{\phi}})\cdot \error(\hypothesis) &\leq  \frac{\epsilon \cdot Q_\textnormal{max} + \delta_2}{(1 - \eta)^{\lambda \cdot \delta_1} }
\label{eq.appendix4.label2}
\end{align}
where step (a) uses the fact that $\sum_{\hypothesis \in \Hypotheses} \widetilde{Q}(\hypothesis|\teachOpt{\epsilon, \Delta_{\phi}}) \leq 1$, and step (b) uses the condition from Eq.~\ref{eq.appendix4.label1}. 

Next, we compute the error of the learner for set $\teachOpt{\epsilon, \Delta_{\phi}}$  as follows:
    \begin{align*}
       \Error(\teachOpt{\epsilon, \Delta_{\phi}}) &=  \sum_{\hypothesis \in \Hypotheses}\frac{Q(\hypothesis|\teachOpt{\epsilon, \Delta_{\phi}})}{\sum_{\hypothesis' \in \Hypotheses}Q(\hypothesis'|\teachOpt{\epsilon, \Delta_{\phi}})} \error(\hypothesis)  \\
       &\leq \sum_{\hypothesis \in \Hypotheses}\frac{Q(\hypothesis|\teachOpt{\epsilon, \Delta_{\phi}})}{Q_0(\hstar)} \error(\hypothesis)\\
       &\stackrel{(a)}{\leq} \frac{\epsilon \cdot Q_\textnormal{max} + \delta_2}{Q_0(\hstar)\cdot(1 - \eta)^{\lambda \cdot \delta_1}}        \\    
    \end{align*}
where the last step marked as (a) uses the condition from Eq.~\ref{eq.appendix4.label2}.

\paragraph{Proof of the second part of the theorem\\}
By definition, $\teachOpt{\epsilon, \Delta_{\phi}}$ is the smallest set $S$ that ensures
\begin{align}\label{eq.appendix4.label3}
    \sum_{\hypothesis \in \Hypotheses} \big(Q_0(\hypothesis) - \widetilde{Q}(\hypothesis|S)\big) \cdot \widetilde{\error}(\hypothesis) \geq \sum_{\hypothesis \in \Hypotheses} Q_0(\hypothesis) \cdot \widetilde{\error}(\hypothesis) - \epsilon \cdot Q_0(\widetilde{\hstar}).
\end{align}

In the following, we will use $\hat{\epsilon} = \frac{(\epsilon \cdot Q_{\textnormal{min}} - \delta_2) \cdot (1 - \eta)^{\lambda \cdot \delta_1}}{Q_0(h^*)}$. We will show that the set $\opt_{\hat{\epsilon}}$ also satisfies condition Eq.~\ref{eq.appendix4.label3}, which in turn would imply that $|\teachOpt{\epsilon, \Delta_{\phi}}| \leq |\opt_{\hat{\epsilon}}|$. We will make use of the following condition which follows from the definition of  the set $\opt_{\hat{\epsilon}}$:
\begin{align}
    \sum_{\hypothesis \in \Hypotheses} \big(Q_0(\hypothesis) - Q(\hypothesis| \opt_{\hat{\epsilon}} )\big) \cdot \error(\hypothesis) &\geq \sum_{\hypothesis \in \Hypotheses} Q_0(\hypothesis) \cdot  \error(\hypothesis) - \hat{\epsilon} \cdot Q_0(\hstar) \notag \\
\Rightarrow\quad \sum_{\hypothesis \in \Hypotheses} Q(\hypothesis| \opt_{\hat{\epsilon}} ) \cdot \error(\hypothesis) &\leq \hat{\epsilon} \cdot Q_0(\hstar) \label{eq.appendix4.label4}.
\end{align}
%


Substituting $S$ with $\opt_{\hat{\epsilon}}$, we start from the left hand side of Eq.~\ref{eq.appendix4.label3}, and then we will follow a series of steps to arrive at the right hand side of the Eq.~\ref{eq.appendix4.label3}.

\begin{align*}
 \sum_{\hypothesis \in \Hypotheses} \big(Q_0(\hypothesis) - \widetilde{Q}(\hypothesis|\opt_{\hat{\epsilon}})\big) \cdot \widetilde{\error}(\hypothesis) & \stackrel{(a)}{\geq}  \sum_{\hypothesis \in \Hypotheses} Q_0(\hypothesis) \cdot \widetilde{\error}(\hypothesis) -  \sum_{\hypothesis \in \Hypotheses}  \widetilde{Q}(\hypothesis|\opt_{\hat{\epsilon}}) \cdot \big(\error(\hypothesis) + \delta_2\big)\\ 
& \stackrel{(b)}{\geq}  \sum_{\hypothesis \in \Hypotheses} Q_0(\hypothesis) \cdot \widetilde{\error}(\hypothesis) -   \delta_2 - \sum_{\hypothesis \in \Hypotheses}  \widetilde{Q}(\hypothesis|\opt_{\hat{\epsilon}}) \cdot \error(\hypothesis)\\ 
& \stackrel{(c)}{\geq}  \sum_{\hypothesis \in \Hypotheses} Q_0(\hypothesis) \cdot \widetilde{\error}(\hypothesis) -   \delta_2 - (1 - \eta)^{-\lambda \cdot \delta_1} \cdot \sum_{\hypothesis \in \Hypotheses}  Q(\hypothesis|\opt_{\hat{\epsilon}}) \cdot \error(\hypothesis)\\ 
& \stackrel{(d)}{\geq}  \sum_{\hypothesis \in \Hypotheses} Q_0(\hypothesis) \cdot \widetilde{\error}(\hypothesis) -   \epsilon \cdot Q_{\textnormal{min}}\\ 
& \geq  \sum_{\hypothesis \in \Hypotheses} Q_0(\hypothesis) \cdot \widetilde{\error}(\hypothesis) - \epsilon \cdot Q_0(\widetilde{\hstar}). \\ 
\end{align*}
In the above, step (a) uses the second condition from Definition~\ref{def:noise-in-representation:feature}, step (b) uses the fact that  $\sum_{\hypothesis \in \Hypotheses}  \widetilde{Q}(\hypothesis|\opt_{\hat{\epsilon}}) \leq 1$, and steps (c), (d) follow by utilizing conditions in Eq.~\ref{eq.appendix4.label1} and Eq.~\ref{eq.appendix4.label4}.


\end{proof}

}
}
{}

\end{document}